\newcommand{\subjectto}{{\text {subject to}}}
\DeclareMathOperator*{\argmin}{\arg\!\min}
\begin{document}
%
\title{Distributed Sparse Feature Selection in Communication-Restricted Networks}
%
%
%

\author{Hanie~Barghi$^*$,~
        Amir~Najafi$^\dagger$,~
        and~
        Seyed~Abolfazl~Motahari$^*$
\thanks{$*$~Data Analytics Lab (DAL), Department
of Computer Engineering, Sharif University of Technology, Tehran,
Iran.~E-mails: hbarghi@ce.sharif.edu,~motahari@sharif.edu}
\thanks{$\dagger$~School of Mathematics, Institute for Research in Fundamental Sciences (IPM), P.O. Box: 19395-5746, Tehran, Iran.~E-mail: najafi@ipm.ir}
}

%
%

\markboth{IEEE Transactions on Signal Processing (submitted)}{Barghi \MakeLowercase{\textit{et al.}}: Distributed Sparse Feature Selection in Communication-Restricted Networks}
%



\maketitle

\renewcommand{\qedsymbol}{$\blacksquare$}
\newcommand{\norm}[1]{\left\lVert#1\right\rVert}
\newtheorem{theo}{Theorem}
\newtheorem{lemm}{Lemma}
\newtheorem{corl}{Corollary}
\DeclarePairedDelimiter\abs{\lvert}{\rvert}%

\begin{abstract}
This paper aims to propose and theoretically analyze a new distributed scheme for sparse linear regression and feature selection. The primary goal is to learn the few causal features of a high-dimensional dataset based on noisy observations from an unknown sparse linear model. However, the presumed training set which includes $n$ data samples in $\mathbb{R}^p$ is already distributed over a large network with $N$ clients connected through extremely low-bandwidth links. Also, we consider the asymptotic configuration of $1\ll N\ll n\ll p$. In order to infer the causal dimensions from the whole dataset, we propose a simple, yet effective method for information sharing in the network. In this regard, we theoretically show that the true causal features can be reliably recovered with negligible bandwidth usage of $O\left(N\log p\right)$ across the network. This yields a significantly lower communication cost in comparison with the trivial case of transmitting all the samples to a single node (centralized scenario), which requires $O\left(np\right)$ transmissions. Even more sophisticated schemes such as ADMM still have a communication complexity of $O\left(Np\right)$. Surprisingly, our sample complexity bound is proved to be the same (up to a constant factor) as the optimal centralized approach for a fixed performance measure in each node, while that of a na\"{i}ve decentralized technique grows linearly with $N$.  Theoretical guarantees in this paper are based on the recent analytic framework of debiased LASSO in \cite{javanmard2018}, and are supported by several computer experiments performed on both synthetic and real-world datasets.
\end{abstract}

\begin{IEEEkeywords}
Distributed Learning, Debiased LASSO, Sparse Feature Selection, Communication Cost, Sample Complexity
\end{IEEEkeywords}

%
\IEEEpeerreviewmaketitle

\section{Introduction}

\IEEEPARstart{D}{istributed} machine learning has become increasingly popular nowadays due to the abundance of real-world data as well as the fact that data instances are split among several connected machines. For instance, smartphones have already been transformed into powerful sensors that constantly gather information through their cameras and microphones. These devices are frequently carried by millions of people around the globe and generate huge amounts of distributed data while remain connected to a few servers in a large network, e.g., the Internet. Although such data can greatly improve the quality of AI-based services on the server-side, the computational budget and more importantly the required bandwidth to transmit the samples to a single server are prohibitively expensive. Moreover, the transmission of raw user data raises serious concerns w.r.t. privacy. The limitations mentioned above, followed by the fact that client-side devices of this era are also highly-capable computing machines, have motivated researchers to develop schemes for learning from data in a distributed and privacy-preserving manner.  


Training machine learning models in each client in isolation faces two major obstacles. First, the high dimension of feature vectors does not naturally match the insufficient size of local datasets in each client. This issue increases the risk of overfitting and makes the locally trained models to be unreliable. Second, clients should not (due to privacy) or cannot (due to bandwidth restriction) transmit data samples to each other or to a server node, which constrains the amount of transmitted information not to exceed a certain threshold. This prevents us from accumulating all the data on a single server.

This problem has been the center of attention in a variety of research areas \cite{Wangcmef, Aijuncom, JMLR:v21:20-297,JMLR:v18:16-002, pmlr-v130-bao21a, 10.5555/3122009.3290415}. In this work, we set out to provide a solution for a particular case of interest: sparse linear feature selection in high dimensions when samples are distributed over a network of size $N\ge 1$ with at least one central node. We assume the links in the network to have extremely low bandwidths, such that the amount of transmitted information per link may not exceed $\tilde{O}\left(1\right)$ w.r.t. dimension $p$ and sample size $n$. The core idea behind our solution is to first, identify an initial set of supposedly causal dimensions on each client through a weakly-regularized variant of debiased LASSO estimator \cite{javanmard2018}. All such dimension sets can be passed to a server with negligible communication cost. Then, a simple majority-voting scheme aggregates the received information at the server-side and outputs an appropriate final set for causal dimensions. This final set can then be shared with all the clients, again with a negligible number of transmissions per link. We both theoretically and experimentally show that the accuracy of selected features returned by our method significantly surpasses those that can be achieved in each client, separately (Decentralized scenario). In fact, our achieved accuracy in terms of true/false positive rates is surprisingly close to that of the optimal case in which all data samples are already accumulated in a central server node (centralized scenario). 

From now on, let us denote by $\rho>0$ the signal-to-noise ratio of the unknown sparse model, which is the ratio of the non-zero coefficient values in the linear model to the standard deviation of the observation noise. Mathematically speaking, we prove that for a signal-to-noise ratio of $\rho$, a \emph{proper} asymptotic configuration which we define and discuss in details in Section \ref{performanceanalysis}, and as long as $N$ is appropriately bounded, the minimum-required sample size in order to guarantee a false positive rate of $\epsilon\log p/p$ and true positive rate of $1-\delta$, for $\epsilon,\delta>0$, is $n\ge \Theta\left(\rho^{-2}\log\frac{1}{\epsilon\delta}\right)$. However, this bound corresponds to the optimal centralized scenario where all data patches are already transmitted to a single server node. Hence, this strategy imposes a prohibitively high communication cost of $O\left(np\right)$ total transmissions. In another extreme scenario, the learner may wish to train each node in isolation, i.e., the fully decentralized case. This reduces the communication cost to $O(1)$, however, the minimum required sample size in order to guarantee that \emph{all} nodes are doing fine would inevitably increase to $n\ge \Theta\left(N\rho^{-2}\frac{1}{\epsilon\delta}\right)$, which is not practical in most real-world situations. 

Our proposed method effectively mitigates between sample complexity and communication cost while also distributes the computational complexity all over the network. More precisely, the proposed method offers a communication cost of $O\left(N\log p\right)$, while requires a total sample size of $n\ge\Theta\left(\rho^{-2}\log\frac{1}{\epsilon\delta}\right)$ which is the same (up to a vanishing constant factor) as the optimal centralized case. See Table \ref{table:theoryPerformance} for a more informative comparison. Considering the fact that we usually have $p\gg n\gg N\gg1$ in practice, the above bounds imply a significant reduction in the number of transmissions without the need to increase the sample complexity compared to the fully-decentralized case. Additionally, the proposed method preserves privacy since it does not transmit raw user data. Our theoretical analysis is based on a number of fundamental guarantees for debiased LASSO regression, which have been recently proved in \cite{javanmard2018}. Moreover, the experimental validations in this paper consider both synthetic and real-world data, where we show a comparable performance has been achieved with rival methods which either impose more communication cost, or lack theoretical guarantees for their solutions.

\begin{table}[t]
\caption{Sample complexity, communication cost, and computational complexity tradeoff between our proposed method and two extreme scenarios. Please note that we usually have $1\ll N\ll n \ll p$ in practice, which shows the extent of improvements in this work. 
}
\label{table:theoryPerformance}
\begin{tabular}{l l l l}
& &\\ 
\hline
\\[-2mm]
& 
Minimum required &
Total commu-&
Computation-
\\
& 
sample size, $n\ge$ &
nication cost &
al complexity
\\[1mm]
\hline
\\[-3mm]
Decentralized  & 
$\Theta\left(N\rho^{-2}\log\frac{1}{\epsilon\delta}\right)$ & 
$O(1)$ & 
$O\left(Nnp^2+p^3\right)$
\\
Centralized &
$\Theta\left(\rho^{-2}\log\frac{1}{\epsilon\delta}\right)$ & 
$O(np)$ & 
$O\left(np^2+p^3\right)$
\\
\textbf{Proposed} &
$\Theta\left(\rho^{-2}\log\frac{1}{\epsilon\delta}\right)$ & 
$O(N\log p)$ & 
$O\left(np^2+p^3\right)$
\\
\hline
\end{tabular}
\end{table}

\subsection{Related Works}
The most popular techniques in distributed learning are based on parameter sharing in a star-shaped network \cite{McMahan2017CommunicationEfficientLO, 10.1561/2200000016, 7178675, 9116528}. Here, the server node is supposed to constantly send the aggregated model parameters to all the clients, where each client can then update parameters based on its local data. The server again aggregates the updated parameters sent from each client and iterates this process until some convergence criteria are satisfied. The alternating direction method of multipliers (ADMM) is a well-known technique in this family, first developed in \cite{M2AN_1975__9_2_41_0} and \cite{GABAY197617}, and then used to solve distributed convex optimization problems in \cite{10.1561/2200000016}. Also, the convergence analysis of ADMM under any convex or sub-convex regime is discussed in \cite{pmlr-v37-nishihara15} and \cite{hong2017linear}. ADMM primarily requires a star-shaped network, however, a newer version called Distributed-ADMM has been recently introduced in \cite{mota2013d} which works on any connected graph. Another noticeable method for training distributed deep networks is the FederatedAveraging algorithm in \cite{McMahan2017CommunicationEfficientLO}.

Recently, distributed sparse linear regression has become increasingly popular; For instance, \cite{HUANG2020107497, 6375851} employ diffusion adaptation to maintain the sparsity of the learned linear models. Generally speaking, a central idea to guarantee sparsity in such models is to use $\ell_1$-regularization; For example, see \cite{8821568, locallyconvex, pmlr-v70-wang17f}. The mentioned methods share all the model parameters in the network, making the communication cost be at least the order of data dimension. Notably, \cite{8821568} introduced a debiased LASSO model for a network-based setting that comes with some theoretical convergence guarantees. In \cite{locallyconvex}, communication cost is the order of the data dimension, but there is an upper-bound for the steps needed for convergence. It is in contrast to methods such as ADMM, which suffer from slow convergence. In another related line of work, some existing algorithms are based on iterative hard thresholding (IHT) to keep the model parameter sparse, e.g., \cite{iht1, BLUMENSATH2009265}. Additionally, \cite{6638510, 7178675} develop the IHT grounded methods for distributed sparse learning, where \cite{7178675} in particular highly concentrates on communication cost reduction.

The high-dimensional nature of the majority of real-world problems has made feature selection to be an essential stage in modern machine learning. In particular, the importance of dimension reduction in distributed learning has motivated researchers to develop methods for {\it {distributed feature selection}}; see \cite{HODGE201624, MORANFERNANDEZ201727, LOPEZ2021124, Tadist2021SDPSOSD, GONZALEZDOMINGUEZ2019399}. A number of recent approaches in this line of research concentrate on reducing the communication cost \cite{9116528, Gui2020ADAGESAA}. In particular, the method proposed by \cite{Gui2020ADAGESAA} has been claimed to effectively decrease the communication cost by applying a thresholding method; However, the mentioned works lack explicit theoretical guarantees for their claimed performance improvements.

The rest of the paper is organized as follows: Mathematical definition of the problem is given in Section \eqref{problemdef}. Our proposed method is discussed in Section \eqref{promthd}. In Section \eqref{performanceanalysis}, we theoretically analyze our method and present some formal performance guarantees. Section \ref{sec:exp} is devoted to experiments on both synthetic and real-world data. Finally, conclusions are made in Section \ref{sec:conc}.
\section{Problem Setting and Main Results} \label{problemdef}

We consider a linear sparse Gaussian model
\begin{equation}
    \label{linearmodel}
    Y = X\theta^*+\omega,    
\end{equation}
where $ X\in \mathbb{R}^{n\times p}$ and $Y\in \mathbb{R}^n$ are the design matrix and observation vector, respectively. Also, $\omega \sim \mathcal{N}(0, \sigma^2 I_{n\times n})$ is the additive white observation noise. Parameters $n$ and $p$ represent the sample size and data dimension, respectively. The rows of design matrix $X$ are i.i.d instances of a zero-mean multivariate Gaussian distribution with covariance matrix $\Sigma\in\mathbb{R}^{p\times p}$. We have assumed $\Sigma$ to be positive definite with a bounded condition number\footnote{It should be noted imposing no restrictions on the rank of $\Sigma$ could turn feature selection into an impossible task. For example, assume $\Sigma$ is rank-$1$, then all dimensions in each row of $X$ become identical up to constant factors, and the model becomes unlearnable.}. Also, without loss of generality, let us assume all the diagonal entries of $\Sigma$ are $1$. We have examined both cases of 1) known and 2) unknown covariance matrix in this work. For the latter case, i.e., unknown $\Sigma$, some further assumptions on $\Sigma$ are required for the analysis to be non-trivial \cite{javanmard2019}. For example, we assume that the dimensions of the presumed multivariate Gaussian can only be weakly dependent, which means the precision matrix (inverse of covariance) should be sparse. More relaxed arguments can replace the above assumption with the help of more sophisticated mathematical tools. However, this approach goes beyond the scope of this paper.

The latent parameter $\theta^* \in \mathbb{R}^{p}$ is assumed to be $s_0$-sparse, i.e. it has $s_0$ nonzero entries with $s_0\ll p$, and the support of $\theta^*$, denoted by $T$, can be any subset of size $s_0$ from the dimension set $[p]$. In order to simplify some of our asymptotic results, let us assume $s_0\leq O\left(\log p\right)$. Moreover, nonzero entries of $\theta^*$ are assumed to satisfy the following condition:
\begin{equation*}
\abs{\theta^*_i} \ge \beta\quad\forall{i \in T}.
\end{equation*}
This assumption is necessary for feature selection, since if any dimension in the support of $\theta^*$ becomes infinitesimally small, then perfect recovery of $T$ becomes asymptotically impossible. Throughout the paper, we refer to the quantity $\beta/\sigma$ as Signal-to-Noise Ratio (SNR). Finally, we assume data is row-wise distributed amongst $N\ge 1$ clients. For simplicity, we assume each client has access to $\left\lfloor n/N\right\rfloor$ data samples. However, generalizing the results to non-equal data subsets is straightforward as long as each data patch still has a size of $\Theta\left(n/N\right)$ in the asymptotic regime.

The problem is to find an estimator for $T$, denoted by $\hat{T}\subset[p]$, such that 1) the number of False Positives (FP) defined as
$$
\mathbf{FP}=\left\vert \hat{T}-T\right\vert
$$
remains $O\left(\log p\right)$ with an arbitrarily small constant, and 2) the number of True Positives (TP) defined as
$$
\mathbf{TP}=\left\vert T\cap\hat{T}\right\vert
$$
becomes arbitrarily close to $s_0$, all with high probability. Here, $\abs{\cdot}$ denotes the cardinality of a set. The following theorem states our main results:
\begin{theo}[Main Results]
\label{thm:main}
Assume the sparse linear model described in \eqref{linearmodel} such that the unknown coefficient vector $\theta^*$ is $s_0$-sparse with $s_0\leq O\left(\log p\right)$. Also, let total sample size to satisfy $n\ge \Theta\left(s_0\log p\right)$. Suppose data samples are distributed amongst $N$ clients with 
$$
N\leq \min\left[O\left(\frac{n}{s_0\log p}\right),
o\left(p\right)\right],
$$
while all clients are connected to at least one central server node ({\textsc{Bounded Network Size}}). Finally, assume SNR satisfies the following condition ({\textsc{Non-negligible SNR}}):
$$
\frac{\beta}{\sigma}
\ge \Theta\left[
\frac{\sqrt{s_0}\log p}{n/N}+
\sqrt{\frac{\log p}{n/N}}
\right].
$$
Then, the necessary and sufficient conditions on sample complexity and communication cost in order to guarantee
$$
\mathbf{FP}\leq \epsilon\log p
\quad,\quad
\mathbf{TP}\ge s_0\left(1-\delta\right)
$$
for $\epsilon,\delta>0$ are as follows:
\begin{itemize}
\item
(\textbf{centralized}): having $n\ge \Theta\left(\left[\sigma/\beta\right]^{2}\log\frac{1}{\epsilon\delta}\right)$ and a minimum of $O\left(np\right)$ transmissions across the network, guarantees the above goals with probability at least $1-O\left(p^{-1}\right)$.
\item
(\textbf{decentralized}): having $n\ge \Theta\left(N\left[\sigma/\beta\right]^{2}\log\frac{1}{\epsilon\delta}\right)$ and at most $O\left(1\right)$ overall transmissions, guarantees the above goals with probability at least $1-O\left(Np^{-1}\right)=1-o(1)$.
\end{itemize}
Finally, there exists a polynomial-time algorithm such that the same guarantees on true and false positives hold with probability at least $e^{-O\left(Np^{-1}+N^{-1}\right)}=e^{-o(1)}$, as long as the following conditions are satisfied:
\begin{itemize}
\item 
(\textbf{The proposed}): having $n\ge\Theta\left(\left[\sigma/\beta\right]^{2}\log\frac{1}{\epsilon\delta}\right)$, and making $O\left(N\log p\right)$ transmissions across the network.
\end{itemize}
\end{theo}

It should be noted the above assumptions, i.e., bounded network size and non-negligible SNR conditions, are fundamental and cannot be bypassed. In fact, increasing the network size $N$ while $n$ and $p$ are fixed simply means the number of data samples in each client is being reduced toward zero. And at some point, decisions in each client become so unreliable that makes the correct recovery almost impossible, unless actual samples are transmitted. A similar argument holds for the SNR $\beta/\sigma$.

The proof of Theorem \ref{thm:main} is given according to the following agenda: First, we propose a simple and efficient algorithm in Section \ref{promthd} which is claimed to achieve the final (proposed) sample complexity and communication cost bounds in Theorem \ref{thm:main}. Then, we theoretically analyze its performance in Section \ref{performanceanalysis} and concurrently analyze the performance of centralized and decentralized cases, as well.

\section{The Proposed Algorithm}
\label{promthd}

Before formally specifying our algorithm, first let us review some key aspects of sparse regression and feature selection via LASSO algorithm \cite{471413, lasso}. Given the pair $\left(X,Y\right)$, model parameters $\theta^*$ can be learned via LASSO regression:
\begin{equation}
    \label{lasso}
    \hat{\theta}^{\mathrm{Lasso}} = \argmin_{\theta\in\mathbb{R}^p}~\left\{
    \ell_{\lambda}\left(\theta;X,Y\right)\triangleq
    \norm{Y-X\theta}_2^2 + \lambda \norm{\theta}_1\right\},
\end{equation}
where $\lambda\ge0$ is a user-defined regularizing coefficient that controls the sparsity level of $\hat{\theta}^{\mathrm{Lasso}}$. Also, $\left\Vert\cdot\right\Vert_1$ and $\left\Vert\cdot\right\Vert_2$ denote the $\ell_1$ and $\ell_2$ norms, respectively. LASSO regression has been extensively analyzed in previous works, cf. \cite{6866880, lialasso, wainwright_2019}. Here, we are particularly interested in feature selection through investigating the support of the LASSO estimator in \eqref{lasso}. To this end, it has been already shown that the ``debiased LASSO" estimator could be a better alternative \cite{javanmard2018,zhang2011confidence,van_de_Geer_2014,6866880}, which is defined as:  
\begin{equation}
\label{dbe}
\hat{\theta}^d = \hat{\theta}^{\mathrm{Lasso}} + \frac{1}{n} M X^T \left(Y-X\hat{\theta}^{\mathrm{Lasso}}\right).
\end{equation}
Assuming covariance matrix $\Sigma$ is unknown, $M$ in \eqref{dbe} represents an estimation of the inverse covariance matrix $\Theta = \Sigma^{-1}$. Calculating $M$ requires a number of precise steps which are thoroughly discussed in Section 3.3.2 of \cite{javanmard2018}. We have also described it in details in Appendix \ref{app:calculatingM}. We now describe our proposed method which is basically built upon the debiased LASSO estimator in \eqref{dbe}.

By $\left(X_i,Y_i\right)$, we represent the $i$th patch of data which is located in client $i\in[N]$. In our method, each client $i$ attempts to recover the model parameters via \eqref{dbe} using only $\left(X_i,Y_i\right)$. Such local recoveries could be unreliable, e.g., associated with a high $\mathbf{FP}$, since the number of local samples $n/N$ might be relatively small. However, a central server node can have limited communications with the clients to gather, aggregate, and then refine these unreliable estimations into a far more accurate estimator for $T$.

\begin{algorithm}[t]
\label{main_alg}
\SetAlgoLined
\textbf{Stage One:}\\
\quad\quad 
\textbf{for each client $i \in [N]$\ in parallel do:}
\\
\quad\quad\quad 
$\- \hat{\theta}^{\mathrm{Lasso}}_{i} \leftarrow \argmin_{\theta} \ell_{\lambda_i}\left(\theta;X_i, Y_i\right)$
\\
\quad\quad\quad
$\hat{\theta}^{d}_{i} \leftarrow \hat{\theta}^{\mathrm{Lasso}}_i + \frac{1}{n_i} M_i X_i^T \left(Y_i-X_i\hat{\theta}^{\mathrm{Lasso}}_i\right)$ 
\\
\quad\quad\quad
${\mathcal{F}}_i \leftarrow \left\{k: \left\vert\hat{\theta}^{d,(k)}_{i}\right\vert \ge \tau_i\right\}$
\\
\quad\quad\quad
\textbf{Send} ${\mathcal{F}}_i$ to server
\newline

\vspace*{-1mm}
\textbf{Stage Two:}\\
\quad \quad \textbf{server executes}:\\
\quad \quad \quad $\hat{\mathcal{F}} \leftarrow $ majority vote between $\{{\mathcal{F}}_1,...,{\mathcal{F}}_N\}$\\
\quad \quad \quad Compute $\hat{T}$ and send it to all clients.



 \caption{The proposed method 
 (Sample size of client $i$ is denoted by $n_i$, where $n =n_1+\ldots+n_N$).}
\end{algorithm}

Our proposed method consists of two main stages which are formally explained in Algorithm \ref{main_alg}. At the first stage, each client $i$ independently solves
\begin{equation*}
\hat{\theta}^{\mathrm{Lasso}}_i\triangleq
\argmin_{\theta\in\mathbb{R}^p}~
\norm{Y_i-X_i\theta}_2^2 + \lambda_i \norm{\theta}_1,
\end{equation*}
where $\lambda_i$ belongs to a predetermined interval which will be fixed shortly. The estimator $\hat{\theta}^{\mathrm{Lasso}}_i$ is then debiased according to \eqref{dbe}, which results in $\hat{\theta}^{d}_i$. For a specific threshold $\tau_i>0$ which is fixed in Section \ref{performanceanalysis}, the following index set from client $i$ is computed and then transmitted to the server:
$$
\mathcal{F}_i\triangleq
\left\{k:~\left\vert
\hat{\theta}^{d,(k)}_{i}
\right\vert
\ge\tau_i\right\},
$$
where $\hat{\theta}^{d,(k)}_{i}$ represents the $k$th  dimension of $\hat{\theta}^{d}_i$. Therefore, $\mathcal{F}_i$ is the set of features whose magnitudes are above a given threshold $\tau_i$.

In the second stage, the server node performs a majority vote to select those feature indices (or equivalently, dimensions in $[p]$) which have appeared in at least half of the feature sets $\left\{\mathcal{F}_i\vert i\in[N]\right\}$, and then outputs them as the final estimator $\hat{T}$. The server can then share these final dimensions with all the clients in the network.


In the proceeding section, we theoretically show how Algorithm \ref{main_alg} solves this issue with a negligible communication cost. 
In a proper parameter configuration and by carefully choosing $\lambda_i$s, one can keep false positives less than $O\left(\log p\right)$ while having a non-zero true positive with high probability. This way, the average communication cost (number of transmissions per link) for our method is $O\left(\log p\right)$. This corresponds to both the cost per client for accumulating initial causal features at the server side, and also the cost of sending the final learned features back to the clients for possible post training. Note that we have already assumed, and also usually have $s_0=O\left(\log p\right)$ in real-world problems. In contrast to our method, the communication cost of many rival methods, e.g. FederatedAveraging algorithm, is $O\left(pt\right)$ where $t\ge 1$ denotes the number required iterations. This shows a significant reduction in communication cost in our work. 

\section{Performance Analysis} \label{performanceanalysis}
This section is devoted to a thorough theoretical analysis of our method. We begin by showing that for each client $i=1,\ldots,N$, there exists a positive threshold $\tau_i$, such that following events happen with high probability:
\begin{enumerate}
\item $\left\vert\mathcal{F}_i\right\vert\leq O\left(\log p\right) \ll p$,
\item $\mathcal{F}_i$ has positive intersection with $T$.
\end{enumerate}
This way, feature sets $\mathcal{F}_i$ become easy to transmit, and also their intersection, as we show later, gives us the true causal features with a high probability and very low error rate.

In order to prove claims 1) and 2) above, we estimate the expected values of the \emph{number} of truly selected features (true positive or \textbf{TP}), and falsely selected features (\textbf{FP}) in each client. Then, we show \textbf{TP} and \textbf{FP} concentrate around their respective expectations. We use the terms true positive rate (TPR) and false positive rate (FPR) to denote the \emph{probabilities} of selecting a causal and non-causal feature, respectively. Mathematically speaking,
$$
\mathrm{FPR}\triangleq \frac{\mathbb{E}[\mathbf{FP}]}{p-s_0}
\quad,\quad
\mathrm{TPR}\triangleq \frac{\mathbb{E}[\mathbf{TP}]}{s_0}.
$$

For a fixed client $i$, we establish an upper-bound for $\mathbb{E}[\textbf{FP}]$, and a lower-bound for $\mathbb{E}[\textbf{TP}]$, where we show they both decrease exponentially w.r.t. $\tau_i$. However, there exists a positive interval for $\tau_i$ in each client $i$, such that with high probability, TPR is still close to $1$ while FPR has significantly dropped toward zero. By choosing $\tau_i$ from such interval, both claims 1) and 2) hold. Ultimately, we show how a simple majority vote can recapture the true support of $\theta^*$ at the server-side.
\subsection{Specific Notations}
By $\norm{A}_\infty$ or $\infty$-norm of matrix $A$, we mean $\max_i\norm{A_{i}}_1$ where $A_i$ denotes the $i$th row of $A$, and $\abs{A}_\infty$ denotes the maximum absolute value of the entries of $A$, i.e., $\abs{A}_\infty = \max_{i,j} \abs{A_{i,j}}$. The $\ell_1$-norm of $A$ indicated by $\norm{A}_1$ equals to $\max_i \norm{A^T_i}_1$ or the maximum $\ell_1$-norm of the columns of $A$. We write $f(x) \lesssim g(x)$ to say for all $x$ in the domain of $f$, $f(x)$ is approximately bounded by $g(x)$, i.e. $f(x) \le Cg(x)$ for some positive constant $C$. 
\subsection{Analysis of Performance per Client} \label{perclient}

According to Algorithm \ref{main_alg}, the initial set of causal features in each client should be computed via a debiased LASSO estimator. In this regard, we take advantage of Theorem 3.13 in \cite{javanmard2018} (Theorem \ref{thm:debLASSO} below) to bound $\mathbb{E}[\textbf{TP}]$ and $\mathbb{E}[\textbf{FP}]$, and then we show how $\textbf{TP}$ and $\textbf{FP}$ concentrate around their respective means.

\begin{theo}[Debiased LASSO of \cite{javanmard2018}]
\label{thm:debLASSO}
For the sparse linear model of \eqref{linearmodel}, assume $X$ to have i.i.d. rows sampled from $\mathcal{N}\left(0,\Sigma\right)$ and let $s_0$ denote the support size of $\theta^*$. Let $\hat{\theta}^{\mathrm{Lasso}}$ denote the estimator defined by \eqref{lasso} with $\lambda = k \sigma \sqrt{(\log p)/n}$, where $k$ belongs to a fixed and known range of values. Also, define $\hat{\theta}^d$ as in \eqref{dbe}. Then, there exist constants $c$ and $C$ such that, for $n \ge c s_0 \log p$, the following holds for the residuals of debiased LASSO:
\begin{align*}
    \label{clientdist}
	\sqrt{n}\left(\hat{\theta}^d - \theta^*\right) = Z+\tilde{R},
\end{align*}
where $Z\vert X$ follows a Gaussian distribution with zero mean and covariance matrix $\sigma^2M\hat{\Sigma}M^T$. $\hat{\Sigma}$ denotes sample covariance matrix, and $\tilde{R}$ is a residual vector where we have $\norm{\tilde{R}}_{\infty} \le C \sigma \sqrt{\frac{s_0}{n}} \log p$ with probability at least $$1-2pe^{-c_* n/s_0} + pe^{-cn}+8p^{-1},$$ for some constant $c_* >0$.
\end{theo}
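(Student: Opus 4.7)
The plan is to derive the decomposition $\sqrt{n}(\hat{\theta}^d - \theta^*) = Z + R$ by direct algebra from the definition \eqref{dbe}, and then to analyze the two terms separately. Substituting $Y = X\theta^* + \omega$ into the debiased estimator gives
\begin{align*}
\hat{\theta}^d - \theta^*
&= (\hat{\theta}^{\mathrm{Lasso}} - \theta^*) + \tfrac{1}{n} M X^T \left[X(\theta^* - \hat{\theta}^{\mathrm{Lasso}}) + \omega\right] \\
&= (I - M\hat{\Sigma})(\hat{\theta}^{\mathrm{Lasso}} - \theta^*) + \tfrac{1}{n} M X^T \omega,
\end{align*}
where $\hat{\Sigma} = X^T X / n$. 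Multiplying by $\sqrt{n}$ identifies the two pieces $Z \triangleq \tfrac{1}{\sqrt{n}} M X^T \omega$ and $R \triangleq \sqrt{n}(I - M\hat{\Sigma})(\hat{\theta}^{\mathrm{Lasso}} - \theta^*)$.

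The Gaussian claim on $Z$ is immediate: conditional on $X$, the vector $Z$ is a linear function of $\omega \sim \mathcal{N}(0,\sigma^2 I_n)$, so $Z \mid X$ is Gaussian with mean zero and covariance $\tfrac{\sigma^2}{n} M X^T X M^T = \sigma^2 M \hat{\Sigma} M^T$, which is exactly the stated form. No probabilistic bookkeeping is needed for this part.

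The nontrivial work is bounding $R$. I would apply $\norm{A v}_\infty \le \abs{A}_\infty \norm{v}_1$ to obtain
\begin{align*}
\norm{R}_\infty \;\le\; \sqrt{n} \cdot \abs{I - M\hat{\Sigma}}_\infty \cdot \norm{\hat{\theta}^{\mathrm{Lasso}} - \theta^*}_1,
\end{align*}
and then control each factor by a separate high-probability estimate. The factor $\abs{I - M\hat{\Sigma}}_\infty$ is small by the very construction of $M$ in Appendix \ref{app:calculatingM}: the CLIME-style convex program used there enforces $\abs{I - M\hat{\Sigma}}_\infty \lesssim \sqrt{(\log p)/n}$ with high probability, once the true precision matrix $\Omega$ is shown to be feasible for that program. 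The $\ell_1$-consistency bound $\norm{\hat{\theta}^{\mathrm{Lasso}} - \theta^*}_1 \lesssim \sigma s_0 \sqrt{(\log p)/n}$ is the standard LASSO rate under the chosen $\lambda = k\sigma\sqrt{(\log p)/n}$, a restricted-eigenvalue condition on $\hat{\Sigma}$, and the noise tail $\norm{X^T\omega/n}_\infty \lesssim \sigma\sqrt{(\log p)/n}$. Multiplying and absorbing constants yields the claimed $\ell_\infty$ bound on $R$.

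The main obstacle is assembling the high-probability events simultaneously: feasibility and the approximate-inverse quality of $M$, the restricted-eigenvalue property of the Gaussian Gram matrix $\hat{\Sigma}$, and sub-Gaussian tail control of $X^T \omega / n$. A union bound over these three events produces the composite failure probability $2 p e^{-c_\ast n / s_0} + p e^{-c n} + 8 p^{-1}$ appearing in the statement, with the first term coming from the restricted-eigenvalue/compatibility event on a Gaussian design (whence the $n/s_0$ scaling), the second from a singular-value bound on $\hat{\Sigma}$, and the last from Gaussian tails of $X^T \omega$ and coordinates of $M X^T$. Since the theorem is quoted verbatim from \cite{javanmard2018}, I would delegate the careful concentration bookkeeping to Theorem 3.13 and its supporting lemmas there, rather than reproduce each estimate.
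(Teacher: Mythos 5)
Your algebraic decomposition is exactly the right starting point, and the treatment of $Z$ is complete: conditional on $X$, $Z=\tfrac{1}{\sqrt{n}}MX^T\omega$ is Gaussian with covariance $\sigma^2 M\hat{\Sigma}M^T$, and the paper itself offers no proof of this statement at all --- it imports it verbatim from \cite{javanmard2018} --- so there is nothing to compare against except that reference. The problem lies in your bound on $R$. The H\"older-type estimate $\norm{R}_\infty \le \sqrt{n}\,\abs{I-M\hat{\Sigma}}_\infty\,\norm{\hat{\theta}^{\mathrm{Lasso}}-\theta^*}_1$, combined with $\abs{I-M\hat{\Sigma}}_\infty \lesssim \sqrt{(\log p)/n}$ and the standard LASSO rate $\norm{\hat{\theta}^{\mathrm{Lasso}}-\theta^*}_1 \lesssim \sigma s_0\sqrt{(\log p)/n}$, yields $\norm{R}_\infty \lesssim \sigma\, s_0 \log p/\sqrt{n}$, which is larger by a factor of $\sqrt{s_0}$ than the claimed bound $C\sigma\sqrt{s_0/n}\,\log p$. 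That missing $\sqrt{s_0}$ is not ``concentration bookkeeping'': it is the entire content of Theorem 3.13 of \cite{javanmard2018}. The naive product bound was already the argument of the earlier 2014 debiased-LASSO papers and only makes the bias negligible when $n \gg (s_0\log p)^2$; obtaining the improved rate under the stated condition $n \ge c\, s_0\log p$ requires exploiting the Gaussianity of the design through a genuinely different mechanism (a leave-one-out/conditioning analysis of the LASSO solution rather than a worst-case H\"older split), which your outline does not supply and cannot be recovered by tightening the two factors you propose. So as written, your sketch proves a strictly weaker statement and delegates precisely the hard step to the citation.

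Two smaller points. First, the construction of $M$ in Appendix \ref{app:calculatingM} is the nodewise-regression estimator of the precision matrix (as in van de Geer et al.), not a CLIME-style program with an explicit constraint $\abs{I-M\hat{\Sigma}}_\infty \le \mu$; the bound $\abs{I-M\hat{\Sigma}}_\infty \lesssim \sqrt{(\log p)/n}$ still holds, but it comes out of the KKT conditions of the nodewise LASSO rather than feasibility of $\Omega$ in a constrained program, so the justification you give for that factor is misattributed. Second, your accounting of which event produces which term of the failure probability $1-2pe^{-c_* n/s_0}+pe^{-cn}+8p^{-1}$ is plausible but speculative; since the whole quantitative content is being delegated, it would be cleaner to state that the probability is inherited as-is from the cited theorem rather than to reverse-engineer its terms.
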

The explicit bounds in Theorem \ref{thm:debLASSO} correspond to the ideal situation where all the $n$ data points are already gathered in a single server node. Roughly speaking, Theorem \ref{thm:debLASSO} states that under a bounded sparsity condition, estimator $\hat{\theta}^d$ with high probability follows an approximate joint Gaussian distribution as
\begin{equation}
    \label{dis1}
    \hat{\theta}^d|X \sim \mathcal{N}\left(\theta^*+R, \frac{\sigma^2}{n}M\hat{\Sigma}M^T\right),
\end{equation}
where $R\triangleq\tilde{R}/\sqrt{n}$ and thus $\norm{R}_\infty \le C \sigma \frac{\log p \sqrt{s_0}}{n}$, for some constant $C$. 

In this work, we use Theorem \ref{thm:debLASSO} in each client, separately. Also from now on, all expectation operators are implicitly assumed to be conditional, i.e. $\mathbb{E}\left(\cdot\right)=\mathbb{E}\left(\cdot\vert X,\Psi\right)$, where $X$ is the random design matrix and $\Psi$ denotes the event of having $\norm{R}_\infty \leq O\left(\sigma \frac{\log p \sqrt{s_0}}{n/N}\right)$. According to parameter setting of Theorem \ref{thm:main} and the results of Theorem \ref{thm:debLASSO}, we already know
$$
\mathbb{P}\left(\Psi\right)\ge
1-2pe^{-c_* n/(Ns_0)} + pe^{-cn/N}+8p^{-1}
= 1-\tilde{O}(p^{-1}).
$$

Before bounding $\mathbb{E}[\textbf{TP}]$ and $\mathbb{E}[\textbf{FP}]$ and analyze their concentration bounds, let us obtain a high probability upper bound for $\abs{M\hat{\Sigma}M^T}_\infty$ via the following two lemmas:
\begin{lemm}
\label{norminfnorm1}
For two arbitrary matrices $A$ and $B$ which are conformable for multiplication, we have:
\begin{equation*}
    \abs{AB}_\infty \le \abs{A}_\infty \norm{B}_1
\end{equation*}
\end{lemm}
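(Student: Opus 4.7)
The plan is to prove this inequality by a direct entry-wise calculation, exploiting the fact that $|\cdot|_\infty$ is an entry-wise norm while $\|\cdot\|_1$ is (per the paper's convention) the maximum column $\ell_1$-sum, so the two combine naturally through the triangle inequality. I would begin by writing a generic entry of the product: $(AB)_{ij} = \sum_k A_{ik}B_{kj}$, assuming $A$ and $B$ have compatible dimensions so that the product is defined.

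Next, I would apply the triangle inequality to obtain $|(AB)_{ij}| \le \sum_k |A_{ik}|\cdot|B_{kj}|$, and then bound each $|A_{ik}|$ by $|A|_\infty = \max_{i',j'}|A_{i'j'}|$. Pulling this uniform bound outside the sum leaves $|A|_\infty \sum_k |B_{kj}|$, which is exactly $|A|_\infty \cdot \|B_j\|_1$, where $B_j$ is the $j$th column of $B$. Taking the maximum over $j$ gives $\|B\|_1$ by the paper's definition, and then taking the maximum over $(i,j)$ on the left-hand side yields $|AB|_\infty$, producing the claimed bound.

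There is essentially no obstacle: the proof is a one-line triangle-inequality argument once one observes how the two norms are structured. The entry-wise norm on $A$ controls each scalar coefficient uniformly, while the column-sum norm on $B$ is exactly what is produced when one sums absolute values of a single column. Any difficulty would be purely notational — e.g., being careful to state that $B_j$ refers to the $j$th column so that $\|B_j\|_1 = \sum_k |B_{kj}|$ matches the definition of $\|B\|_1$ in the paper. Since the argument does not use any structural property of $A$ or $B$ beyond compatibility of dimensions, no further assumptions or auxiliary lemmas are needed.
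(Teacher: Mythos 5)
Your proof is correct and follows essentially the same route as the paper: the paper writes $\abs{AB}_\infty = \max_{i,j}\abs{\langle A_i, B_j\rangle}$ and invokes H\"older's inequality with exponents $\infty$ and $1$, which is exactly the triangle-inequality-plus-uniform-bound computation you carry out explicitly. No gap; your version simply unpacks the H\"older step.
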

\begin{lemm}
\label{covbound}
Consider the linear model in \eqref{linearmodel}, and let $\hat{\theta}^d$ be as in \eqref{dbe}. According to \eqref{dis1}, we know that $\hat{\theta}^d |X \sim \mathcal{N}\left(\theta^* + R, \frac{\sigma^2}{n} M \hat{\Sigma}M^T\right)$ with high probability, then we have:
\begin{equation*}
\max_{i\in[p]}
    \left\vert
    {\left(M\hat{\Sigma}M^T\right)_{i,i}}
    \right\vert
    \lesssim 1 + \sqrt{\frac{\log p}{n}}
\end{equation*}
\end{lemm}
The proofs of Lemmas \ref{norminfnorm1} and \ref{covbound} are given in Appendix \ref{appendix}. 
Recall $\hat{\theta}^d_i$ as the debiased LASSO estimator of client $i\in[N]$. Then, by combining the result of Lemma \ref{covbound} with \eqref{dis1}, we have
\begin{equation}
    \label{dist2}
    \hat{\theta}^d_i|X \sim \mathcal{N}\left(\theta^* + R, \tilde{\Sigma}\right),
\end{equation}
where we have
\begin{align*}
&\abs{R}_\infty \le R_{\max}\triangleq O\left(\sigma \frac{\sqrt{s_0} \log p}{n/N}\right)\quad\quad\mathrm{and}\\
&\abs{\tilde{\Sigma}}_\infty \lesssim \sigma^2_{\max}\triangleq \frac{\sigma^2}{n/N}\left(1+\sqrt{\frac{\log p}{n/N}}\right),
\end{align*}  
with probability at least $1-O\left(p^{-1}\right)$.


Next, we use the results of Theorem \ref{thm:debLASSO} and Lemmas \ref{covbound} and \ref{norminfnorm1} for analyzing $\mathbb{E}[\textbf{TP}]$ and $\mathbb{E}[\textbf{FP}]$ in each client $i\in\left[N\right]$. The following theorem gives theoretical guarantees that as long as the number of samples per client $n/N$ and signal-to-noise ratio $\beta/\sigma$ are sufficiently large w.r.t. dimension $p$, then false positive and true positive rates can be kept near $0$ and $1$, respectively.

\begin{theo}
Assume the linear sparse model of \eqref{linearmodel}, where data samples are equally distributed among $N$ distinct clients. Also, assume the \textsc{Bounded Netowrk Size} and \textsc{Non-Negligible SNR} conditions of Theorem \ref{thm:main} are satisfied. For any client $i$, let us use the data portion $\left(X_i,Y_i\right)$ to obtain a debiased LASSO estimator $\hat{\theta}^{d}_i$ which is then used to estimate the causal features according to Algorithm \ref{main_alg}: by comparing the magnitudes of $\hat{\theta}^{d,(k)}_i,~k\in\left[p\right]$ with the threshold $\tau_i>0$. 

Assume a known function $\zeta\left(\cdot\right)=\zeta\left(\cdot\vert n/N,p,s_0\right)$, where we have
$$
\zeta\left(x\right)=O\left[
\frac{\sqrt{s_0}\log p}{n/N}+
\left(\frac{\log(x^{-1})}{n/N}\right)^{1/2}
\sqrt{1+\sqrt{\frac{\log p}{n/N}}}
\right]
$$
for $x\in\mathbb{R}_{+}$.
For arbitrary $\epsilon,\delta\in(0,1)$, assume $\zeta\left(\epsilon^{-1}\right),\zeta\left(\delta^{-1}\right)<\beta/(2\sigma)$. Also assume
$$
\tau_i\in
\left[
\sigma\zeta\left(\epsilon^{-1}\right)
~,~
\beta-\sigma\zeta\left(\delta^{-1}\right)\right],
$$
where the interval is non-empty due to above assumptions. Then, FPR is upper-bounded by $\epsilon$ and TPR is lower-bounded by $1-\delta$, with probability at least $1-O(p^{-1})$.
\label{thm:perclientbound}
\end{theo}
\begin{proof}
Let us define the random indicator variables $\hat{A}_1, ..., \hat{A}_p$ as follows:
\begin{equation*}
  \ \hat{A}_k = \begin{cases}
    1& \left\vert\hat{\theta}^{d,\left(k\right)}_i\right\vert \ge \tau_i\\
    0              & \text{otherwise}
    \end{cases},\quad
    k=1,\ldots,p.
\end{equation*}
Thus, random entities \textbf{FP} and \textbf{TP} can be rewritten as
\begin{align*}
    \textbf{FP} = \displaystyle \sum_{k\not\in T} \hat{A}_k
    \quad,\quad
    \textbf{TP} = \displaystyle \sum_{k\in T} \hat{A}_k.
\end{align*}
We have the following set of equations for $\mathbb{E}\left[\textbf{FP}\right]$:
\begin{align*}
\mathbb{E}\left[\textbf{FP}\right] & = \mathbb{E} \left[\displaystyle \sum_{k\not\in T} \hat{A}_k\right]
    = \displaystyle \sum_{k\not\in T} \mathbb{E} \left[ \hat{A}_k \right]
     = \sum_{k\not\in T} \mathbb{P}\left(\left\vert\hat{\theta}^{d,(k)}_i\right\vert \ge \tau_i\right).
\end{align*}
Based on the fundamental framework which is provided form Theorem \ref{thm:debLASSO}, all the following results inside the proof hold with a probability at least $\mathbb{P}\left(\Psi\right)\ge 1-2pe^{-c_* n/s_0} + pe^{-cn}+8p^{-1}$.
From \eqref{dist2}, we have (while noting that $\forall{k \not \in T}:~ \theta^*_k = 0$)
\begin{align}
\label{preexpfalse}
\forall_{k \not\in T}:~ &\mathbb{P}\left(\left\vert\hat{\theta}^{d,(k)}_i\right\vert \ge \tau_i\right) = 
\mathbb{P}\left(\hat{\theta}^{d,(k)}_i \ge \tau_i\right) +
\mathbb{P}\left(\hat{\theta}^{d,(k)}_i \leq -\tau_i\right)
\nonumber\\
&= 1 - \frac{1}{2}\text{erf}\left(\frac{\tau_i-R_k}{
    \left(2\tilde{\Sigma}_{k,k}\right)^{1/2}
    }\right) -\frac{1}{2} \text{erf}\left(\frac{\tau_i+R_k}{
    \left(2\tilde{\Sigma}_{k,k}\right)^{1/2}
    }\right),
\end{align}
where $\abs{R}_\infty \le R_{max}$ and $\abs{\tilde{\Sigma}}_\infty \lesssim \sigma^2_{max}$. Defining $\mathrm{erfc}\left(u\right)=1-\mathrm{erf}(u)$ for $u\in\mathbb{R}_{+}$ and assuming $\tau_i\ge R_k$, the formulation in \eqref{preexpfalse} can be bounded as follows:
\begin{align}
\mathbb{P}\left(\left\vert\hat{\theta}^{d,(k)}_i\right\vert \ge \tau_i\right)
&\leq
\mathrm{erfc}\left(
\frac{\tau_i-R_k}{\left(2\tilde{\Sigma}_{k,k}\right)^{1/2}}
\right)
\leq
\mathrm{erfc}\left(
\frac{\tau_i-R_{\max}}{\sqrt{2}\sigma_{\max}}
\right)
\nonumber\\
&\leq
\exp\left(-\left[\frac{\tau_i-R_{\max}}{\sqrt{2}\sigma_{\max}}\right]^2\right)\leq\epsilon,
\end{align}
where we have used the inequality $\mathrm{erfc}(x)\leq \exp(-x^2)$ for $x>0$. Also, it worth noting that this bound is tight in the following sense:
$$
\mathrm{erfc}(x)\ge
\sqrt{\frac{2e}{\pi}}\frac{\sqrt{\eta-1}}{\eta}e^{-\eta x^2},
$$
where $\eta>1$ can be arbitrarily chosen. We take advantage of this tightness in later parts of the paper.

Therefore, it implies that $\tau_i\ge R_{\max}+\sqrt{2}\sigma_{\max}\log^{1/2}(1/\epsilon)$, which according to the definitions of $R_{\max}$ and $\sigma_{\max}$ can be simplified as follows:
\begin{align}
\tau_i&\ge C\sigma\left[
\frac{\sqrt{s_0}\log p}{n/N}+
\left(\frac{\log(\epsilon^{-1})}{n/N}\right)^{1/2}
\sqrt{1+\sqrt{\frac{\log p}{n/N}}}
\right],
\end{align}
for some constant $C$. 

Similar to FPR, one can write  $\mathbb{E}\left[\textbf{TP}\right] = \sum_{k\in T} \mathbb{E}\hat{A}_k$. For the case of true positive rate (and unlike FPR), we wish to minimize the above term w.r.t. to $R_k$ and $\tilde{\Sigma}_{k,k}$ to lower-bound $\textbf{TP}$. Also, we know $\hat{\theta}^{d,(k)}_i$ follows a Gaussian distribution with mean $\theta^*_k+R_k$, where $\left\vert\theta^*_k\right\vert\ge\beta$.
Therefore, 
$\forall{k\in T}:~ \mathbb{P}\left(\left\vert\hat{\theta}^{d,(k)}_i\right\vert \ge \tau_i\right) =$
\begin{align}
&1-\mathbb{P}\left(\left\vert\hat{\theta}^{d,(k)}_i\right\vert \leq \tau_i\right)
\nonumber\\
\ge&
1-
\boldsymbol{1}_{\theta^*_k\ge\beta}\mathbb{P}\left(\hat{\theta}^{d,(k)}_i \leq \tau_i\right)-
\boldsymbol{1}_{\theta^*_k\leq-\beta}\mathbb{P}\left(\hat{\theta}^{d,(k)}_i \ge -\tau_i\right)
\nonumber\\
=&\frac{1}{2}-\frac{1}{2}
\text{erf}
\left(\frac{\tau_i-(\theta^*_k+R_k)}{
\left(2\tilde{\Sigma}_{k,k}\right)^{1/2}
}\right)
\nonumber\\
\ge&
1-
\frac{1}{2}\mathrm{erfc}\left(
\frac{\beta-R_{\max}-\tau_i}{\sqrt{2}\sigma_{\max}}
\right)
\ge 1-\delta,
\label{trueprob}
\end{align}
which, following the same procedure as before, implies the following bound for $\tau_i$:
\begin{align}
\tau_i
&\leq
\beta-R_{\max}-\sqrt{2}\sigma_{\max}\log^{1/2}((2\delta)^{-1})
\\
&\leq
\beta-\sigma\cdot O\left(
\frac{\sqrt{s_0}\log p}{(n/N)}+
\left(\frac{\log(\delta^{-1})}{n/N}\right)^{1/2}
\sqrt{1+\sqrt{\frac{\log p}{n/N}}}
\right).
\nonumber
\end{align}
The rest of the job is straightforward and the proof is complete.
\end{proof}

In fact, we already know from Theorem \ref{thm:debLASSO} that coefficients $\hat{\theta}^{d,(k)}_i$ of client $i$ should concentrate around their true corresponding values in $\theta^*$, which are either zero when $k\notin T$, or are larger (in magnitude) than $\beta$ when $k\in T$. Theorem \ref{thm:perclientbound} states that under certain conditions, by choosing the thresholds $\tau_i$ in an appropriate interval, which is a subset of $\left[0,\beta\right]$, we can guarantee that TPR is close to one while FPR has remained small with high probability.




In order to give some intuition to the reader, we set out to graphically illustrate the behavior of both the upper-bound for FPR and the lower-bound for TPR as a function of threshold $\tau_i$, both derived in Theorem \ref{thm:perclientbound}. Here, $n = 20$ and $p = 5000$ have been fixed, and then the number of causal dimensions $s_0$ and/or variance of additive noise $\sigma^2$ have been chosen so that the constraints imposed by the theorem are satisfied. The results are depicted in Figure \ref{fig:tpvsfp}.
As one would expect, both the TPR and FPR (together with their respective lower and upper-bounds) ultimately decrease with an exponential rate as $\tau_i$ is being increased. However, the downfall for lower-bound of TPR starts at a much larger threshold compared to the upper-bound of FPR.

\begin{figure}[t]
    \centering
    \includegraphics[width=0.48\textwidth]{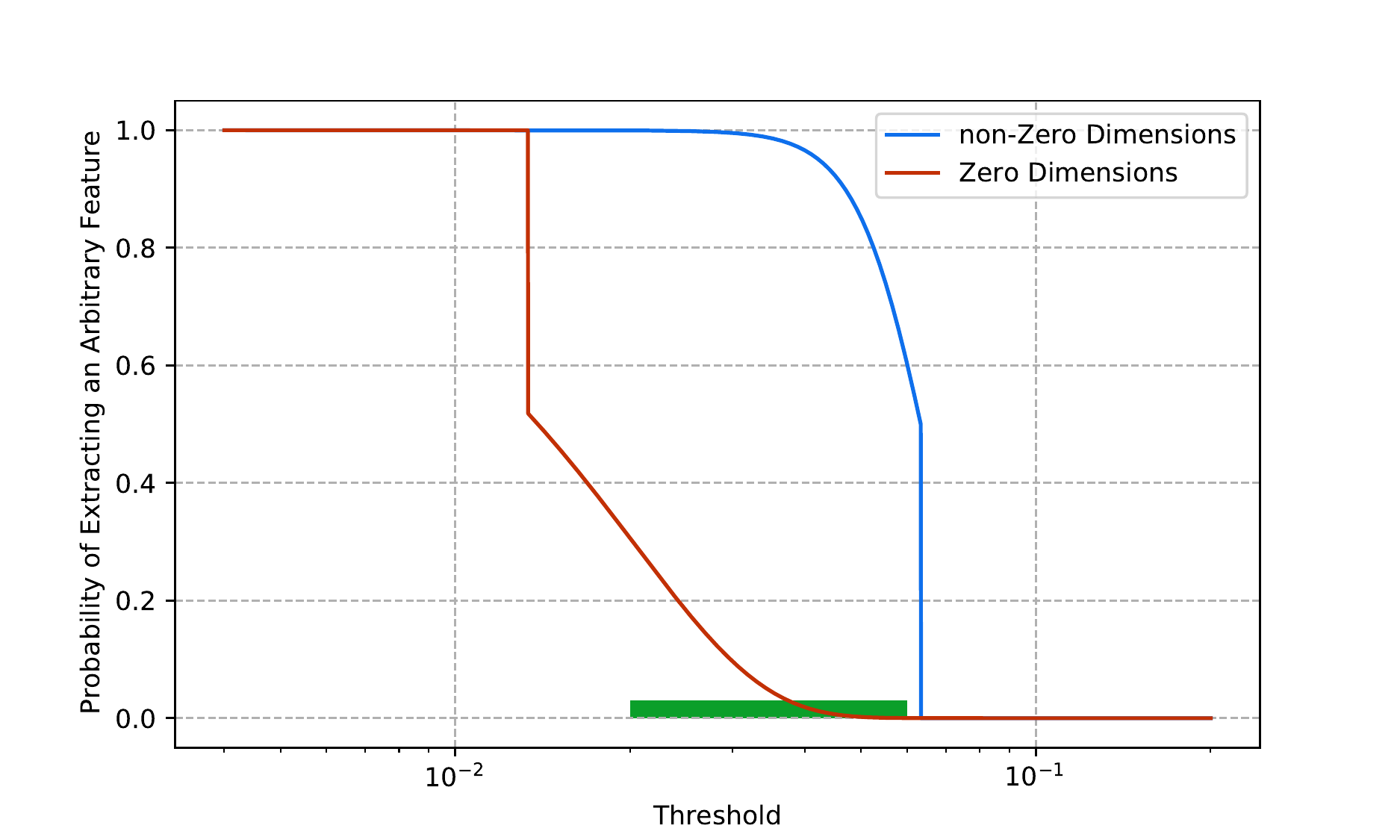}
    \caption{Behaviour of the expected FPR and TPR as a function of threshold in each client. As can be seen, there exists an interval in which TPR is high, while FPR has already declined toward zero. By choosing the threshold from this area, one can attain an intial feature set which includes the moajority of causal factors, while its size might not be too large.}
    \label{fig:tpvsfp}
\end{figure}

In any case, the existence of an appropriate interval for threshold values has been both theoretically and experimentally validated. This gives us some proper candidate sets $\mathcal{F}_i$, as defined in Algorithm \ref{main_alg}, that can be easily transmitted to the server to uncover the true causal features through a simple majority vote. Using the notation from Theorem \ref{thm:perclientbound}, first, let us clarify that the expected FPR and TPR can be easily translated to expectations of $\textbf{TP}$ and $\textbf{FP}$ as follows:
$$
\mathbb{E}[\textbf{TP}]\ge s_0\left(1-\delta\right)\quad,\quad
\mathbb{E}[\textbf{FP}]\leq \left(p-s_0\right)\epsilon.
$$
As we have already discussed, we need $\mathrm{FPR}\leq O(\log p/p)$ in each client to guarantee the vast majority of $\mathcal{F}_i$s remain relatively small in size, i.e., their cardinally remains $O(\log p)$. It should be reminded that the final FPR which would be achieved at the server node must be $O\left(\epsilon\log p/p\right)$, for arbitrary small $\epsilon$. The following corollary gives us explicit bounds for TPR and FPR as functions of primary hyper-parameters such as $n,N,p,s_0$ and signal-to-noise ratio $\beta/\sigma$.

\begin{corl}[Explicit bounds on TPR and FPR]
\label{corl:explicitFPRTPR}
Consider the setting of Theorem \ref{thm:perclientbound}. Then, a proper interval for threshold $\tau_i$ of client $i\in[N]$ exists if, and only if
\begin{equation}
\frac{\beta}{\sigma}
\ge \Theta\left(
\frac{\sqrt{s_0}\log p}{n/N}+\sqrt{\frac{\log p}{n/N}}
\right).
\label{eq:corl:asymp}    
\end{equation}
Moreover, under the above condition a FPR of at most $O\left(\log p/p\right)$ is guaranteed in each client. Let us define $\Lambda\triangleq \log p/(n/N)$. Then, FPR and TPR can be respectively written as
\begin{align}
\mathrm{FPR}&=\exp\left[-\frac{n}{N}f\left(\frac{\beta}{\sigma},s_0,\Lambda\right)\right]
\nonumber\\
\mathrm{TPR}&=1-\exp\left[-\frac{n}{N}g\left(\frac{\beta}{\sigma},s_0,\Lambda\right)\right],
\end{align}
where both $f$ and $g$ have the following asymptotic behavior:
\begin{align}
f,g = \Theta\left(s_0
\frac{\left(\frac{\beta}{\sigma\sqrt{s_0}}-O(\Lambda)\right)^2}{
{1+\sqrt{\Lambda}}
}\right)
\end{align}
\end{corl}
\begin{proof}
First, assume $\epsilon,\delta=O(1)$, then $\max\left(\epsilon^{-1},\delta^{-1}\right)=O(1)$. It can be seen that in order to have a non-empty interval for threshold $\tau_i$, the upper-bound for $\tau_i$ should exceed its lower-bound. Therefore, it is enough to have
$$
\frac{\beta}{\sigma}\ge \Theta\left(
\sqrt{s_0}\Lambda+
\left(
\frac{\log\max\left(\epsilon^{-1},\delta^{-1}\right)}{n/N}
\right)^{1/2}\sqrt{1+\sqrt{\Lambda}}.
\right)
$$
which, considering all situations: $\Lambda\rightarrow0$, $\Lambda=\Theta(1)$, $\Lambda\rightarrow\infty$, and also the fact that $\Lambda > N/n$, readily simplifies to
$$
\frac{\beta}{\sigma} \ge
\Theta\left(
\sqrt{s_0}\Lambda + \sqrt{\Lambda}
\right).
$$
Next, it is easy to see that substituting $\epsilon=O(\log p/p)$ into the bounds of Theorem \ref{thm:perclientbound}, we must have
$$
\frac{\beta}{\sigma}\ge \Theta\left(
\sqrt{s_0}\Lambda+
\left(
\frac{\log p + O(1)}{n/N}
\right)^{1/2}\sqrt{1+\sqrt{\Lambda}}.
\right)
$$
which is already satisfied as long as \eqref{eq:corl:asymp} holds.

Next, we observe that all the upper-bounds that have been used throughout the proof of Theorem \ref{thm:perclientbound} are tight. In this regard, we must have
$$
\sqrt{\frac{\log t^{-1}}{n/N}} = \Theta\left(
\frac{\beta/\sigma - O(\sqrt{s_0}\Lambda)}{\sqrt{1+\sqrt{\Lambda}}}
\right),
$$
where $t$ could be either $\epsilon$ (normalized false positive rate) or $\delta$ (one minus true positive rate).
Using simple algebra, the above relation proves the claimed formulations inside the statement of corollary. This completes the proof.
\end{proof}

According to Corollary \ref{corl:explicitFPRTPR}, it is easy to see that in each client and for asymptotically large $n$ and $p$, in order to guarantee $\mathbf{FP}=\epsilon\log p$ and $\mathbf{TP}=s_0\left(1-\delta\right)$ with probability at least $1-O\left(p^{-1}\right)$, one just needs satisfy the following condition:
$$
n/N\ge \Theta\left[
\left(\frac{\sigma}{\beta}\right)^{2}\log\frac{1}{\epsilon\delta}
\right]
\ge
\Theta\left[
\left(\frac{\sigma}{\beta}\right)^{2}\log\frac{1}{\min(\epsilon,\delta)}
\right]
.
$$
Obviously, if we manage to transmit all the data to a single node and then compute the debiased LASSO estimator, then the above condition reduces to
$
n\ge \Theta\left[
\left(\sigma/\beta\right)^{2}\log\frac{1}{\epsilon\delta}
\right]$; however, an overall of $O\left(np\right)$ transmissions all over the network would be required.

\subsection{Concentration around expected values}

So far, we have analyzed the behavior of TPR and FPR, or equivalently $\textbf{TP}$ and $\textbf{FP}$, through bounding their expected values. However, in order to complete the analysis, one also needs to show that the above quantities properly \emph{concentrate} around their expectations. Since both $\mathbf{TP}$ and $\mathbf{FP}$ are positive random variables, bounding their expectations would naturally give us high probability tail bounds for the actual variables, as well. Using Markov inequality, we have the following upper and lower-bounds for FPR and TPR, respectively.

\begin{corl}
\label{corl:concentration}
Consider the setting described in Theorem \ref{thm:perclientbound}, and assume for some $\epsilon,\delta\in(0,1)$ we have $\zeta\left(\epsilon^{-1}\right),\zeta\left(\delta^{-1}\right)<\beta/(2\sigma)$. Also, assume the threshold $\tau_i$ for client $i\in\left[N\right]$ belongs to the proper interval which has been formulated according to the theorem. Then, with probability at least
$$
1-O\left(N^{-2}+p^{-1}\right)
$$
we have $\mathbf{FP}\leq N^2\epsilon\log p$ and $\mathbf{TP}\ge s_0\left(1-N^2\delta\right)$.
\end{corl}
Proof is straightforward and can be simply obtained through applying Markov inequality on the expectation bounds of Theorem \ref{thm:perclientbound}. In some special cases, where $n/N$ can be much larger than the minimum requirements of Theorem \ref{thm:main}, far better concentration and tail bounds can be achieved through applying existing concentration inequalities for weakly-dependent random variables. Appendix \ref{app:concentration} gives a thorough analysis of such cases for an interested reader.


\subsection{Analysis of Consensus at Server-Side} \label{consensus}
According to the section \eqref{promthd}, the final stage of our proposed algorithm, which is a consensus approach, is to estimate the causal features via majority voting amongst the feature sets that have been received from all the clients. More precisely, we choose those dimensions that appear in at least half of the feature sets. In this sub-section, we calculate high probability tail bounds for $\mathbf{FP}$ and $\mathbf{TP}$ at the server-side. Assume we have in each client $i\in[N]$:
$$
\frac{\mathbf{FP}}{p-s_0}\leq \Delta\in\left(0,1/2\right).
$$
with probability at least $1-O(p^{-1}+N^{-2})$. Then, the post-consensus $\mathbf{FP}$, denoted by $\mathbf{FP}_{\mathrm{MV}}$, can be bounded as
\begin{align}
\frac{\mathbf{FP}_{\mathrm{MV}}}{p-s_0}&\leq
\sum_{i=\left\lceil{N}/{2}\right\rceil}^{N} \binom{N}{i}\Delta^i \left(1-\Delta\right)^{N-i}
\nonumber\\
&\leq \frac{N}{2}\left(\frac{Ne}{N/2}\right)^{N/2}\left(
\frac{\Delta}{1-\Delta}
\right)^{N/2}
\nonumber\\
&\leq N\left(4e\epsilon\right)^{N/2}=e^{-N\Theta\left(\log \Delta^{-1}\right)},
\end{align}
with probability at least
$$
\ge \prod_{i=1}^{N}\left[1-O\left(N^{-2}+p^{-1}\right)\right]
\ge e^{O\left(N^{-1}+Np^{-1}\right)}.
$$
Note that probabilities are simply multiplied due to the statistical independence between clients. Also, since we have $N\leq o(p)$ due to our main assumptions, this probability lower bounds converges to $1$ in the asymptotic case. Similarly, when $\mathbf{TP}\ge s_0\left(1-\Delta\right)$ (with $\Delta\in(0,1/2)$), we would have the following bound for the post-consensus $\mathbf{TP}_{\mathrm{MV}}$:
$$
\frac{\mathbf{TP}_{\mathrm{MV}}}{s_0}\ge
1-e^{-N\Theta\left(\log\Delta^{-1}\right)},
$$
with the same probability bound.

Following the results of Corollary \ref{corl:concentration}, we can easily replace $\Delta$ with $N^2\epsilon\log p/p$ and $N^2\delta$, respectively. As a result, the $\textbf{FP}$ and $\textbf{TP}$ values after the majority vote with high probability of at least $\exp\left(N^{-1}+Np^{-1}\right)$ can be bounded as 
\begin{align}
\mathbf{FP}_{\mathrm{MV}}&\leq
\left(p-s_0\right)
\exp
\left[-nf\left(\frac{\beta}{\sigma\sqrt{s_0}},\Lambda,s_0,N,p\right)
\right]
\quad \mathrm{and}
\nonumber\\
\mathbf{TP}_{\mathrm{MV}}&\ge
s_0-
s_0\exp
\left[-n
g\left(\frac{\beta}{\sigma\sqrt{s_0}},\Lambda,s_0,N,p\right)
\right],
\end{align}
where $\Lambda\triangleq \log p/(n/N)$, and
$$
f,g=
\Theta\left(s_0
\frac{\left(\frac{\beta}{\sigma\sqrt{s_0}}-O(\Lambda)\right)^2}{
{1+\sqrt{\Lambda}}
}\right)
-O\left(\frac{\log N+\log p)}{(n/N)\log\log p}\right).
$$
It is easy to see that based on the parameter setting og Theorem \ref{thm:main}, functions $f$ and $g$ both have the asymptotic behavior of $\Theta\left(\beta^2/\sigma^2\right)$. Thus, the required sample complexity for certifying an FPR of at most $\epsilon\log p/p$ and TPR of at least $1-\delta$, with probability at least $\exp\left[O\left(N^{-1}+Np^{-1}\right)\right]$, would be
$$
n\ge \Theta\left[\left(\frac{\sigma}{\beta}\right)^2\log\frac{1}{\min(\epsilon,\delta)}\right],
$$
which completes the proof of Theorem \ref{thm:main}. With respect to communication cost, it should be reminded that since $\mathbb{E}[\mathbf{FP}]$ is $O\left(\log p\right)$ in each client, and due to the statistical independence between clients, we have
\begin{align}
\mathrm{Comm.~cost}&\leq
\sum_{i=1}^{N}\left(\mathbf{TP}_i+\mathbf{FP}_i\right)+N\left(\mathbf{TP}_{\mathrm{MV}}+\mathbf{FP}_{\mathrm{MV}}\right)
\nonumber\\
&\leq O\left(N\log p\right),
\end{align}
with high probability based on Hoeffding inequality. Without using Algorithm \ref{main_alg}, and if one tries to train each node in isolation with no communication across the network, the minimum sample complexity would naturally become $N$-times larger than that of the centralized case. Therefore, sample complexity should grow linearly with $N$. On the other hand, the centralized scenario has the same (up to a constant that vanishes in the asymptotic regime) sample complexity as our method while imposes a communication cost of $O(np)$. These facts highlight the theoretically-guaranteed superiority of Algorithm \ref{main_alg} compared to the extreme scenarios of centralized and decentralized training.

What remains to do is to compare Algorithm \ref{main_alg} with a number of most prominent rivals in practice. To the best of our knowledge, the majority of well-known techniques prior to this work do not have strong theoretical guarantees in the form of Theorem \ref{thm:main}. However, there are guarantees for methods such as ADMM, as a method of parameter estimation, which are designed to solve similar problems compared to this paper. The proved communication cost bounds for either of such methods are at least $O(Np)$, which is again significantly larger than $O(N\log p)$ of our proposed method. Moreover, it should be noted that almost all existing distributed methods for reliably feature selection such as \cite{MORANFERNANDEZ201727}, \cite{9116528}, or \cite{10.5555/3225642.3225829} require at least $O(N p)$ communication bits. Nevertheless, \cite{Gui2020ADAGESAA} reduces the communication cost to less than $O(p)$, but its shrinkage with respect to total number of dimensions $p$ is not theoretically analyzed.

We can also compare our method with rival techniques through experimenting on a number of synthetic and real-world datasets. In this regard, we have devoted the next section to experimental investigation and comparison with some existing practical methods in this line of research. 

\section{Numerical Experiment
\label{sec:exp}}
This section is devoted to a number of experimental investigations into the theoretical results of previous sections. We have examined the performance of our proposed method over a number of real-world and synthetic datasets. First of all, and for the sake of clarification, let us review a number of well-known criteria for evaluating the performance of any sparse linear regression/feature selection method. The fraction of causal selected dimensions (denoted as ``precision"), and the ``power" of a method to retrieve causal dimensions are defined as
\begin{align*}
\mathrm{precision} = \frac{\mathbf{TP}}{|\hat{T}|}
\quad\mathrm{and}\quad
\mathrm{power} &= \frac{\mathbf{TP}}{s_0},
\end{align*}
respectively. Here, ``power" is also referred to as ``recall" by many researchers. Evidently, for a good feature selection model both of the above quantities should be close to $1$. Thus, we employ ``$F$-Measure", the harmonic mean of precision and recall, which is defined as
\begin{align*}
    F\mathrm{-Measure} = \frac{2\cdot\mathrm{precision}\cdot \mathrm{recall}}{\mathrm{precision} + \mathrm{recall}},
\end{align*}
to evaluate the overall performance.

\subsection{Synthetic Data}

Let us start with a simple example. Consider the linear model of \eqref{linearmodel} where each row of the design matrix $X$ is independently sampled from $\mathcal{N}(0,I)$ with $p=100$, $s_0 = 5$, and $\sigma = 10^{-2}$. We randomly choose the support set of $\theta^*$ form $[p]$, and initialize the coefficient values by uniformly sampling from $\{[-1,-\beta] \cup [\beta, 1]\}$, where $\beta=0.1$. In addition, we set the number of samples per client as $n/N=20$, such that a client would not be able to reliably recover the causal dimensions in isolation. Figure \ref{client} supports this claim, since the obtained $F$-Measure in each single client is at most around $65$ percent.
\begin{figure}
    \centering
    \includegraphics[width=0.48\textwidth]{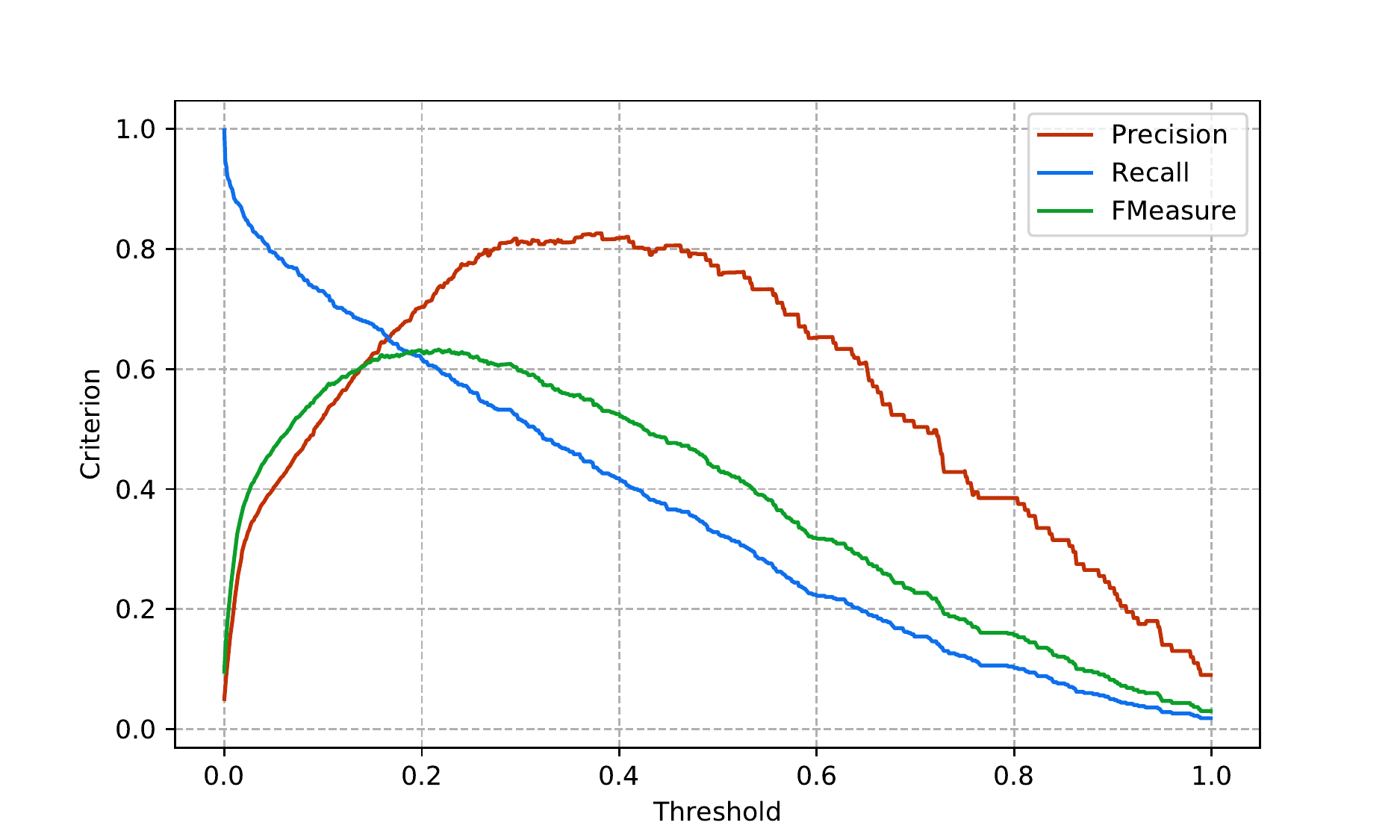}
    \caption{Based on the setting described so far, the $F$-measure in each single client does not reach an acceptable rate.}
    \label{client}
\end{figure}

However, the performance significantly improves through applying Algorithm \ref{main_alg} for extracting the support set of $\theta^*$. Figure \ref{fixed-th} and \ref{fixed-client} show the evaluation criteria of retrieved dimensions in the server as functions of the number of clients, and selection threshold, respectively. More precisely, in Figure \ref{fixed-th}, we force each client to transmit exactly $25$ dimension indices to the server which correspond to the highest absolute values of the debiased-LASSO estimator. These dimension-sets are then aggregated through a majority voting scheme as described earlier. Obviously, given that sample per client $n/N=20$ is fixed, increasing the number of clients $N$ would increase the $F$-measure which can also be seen in Figure \ref{fixed-th}. In Figure \ref{fixed-client}, we fix the number of clients $N=10$ and sample per client $n/N=20$. However, we sweep the threshold for choosing the causal dimensions in each client, i.e. $\tau$, and then compute the $F$-measure and other related criteria. We observe that for a properly-chosen range of threshold values, even $10$ clients are enough to achieve a reliable $F$-measure which is very close to $1$.
\begin{figure}
\centering
\includegraphics[width=0.48\textwidth]{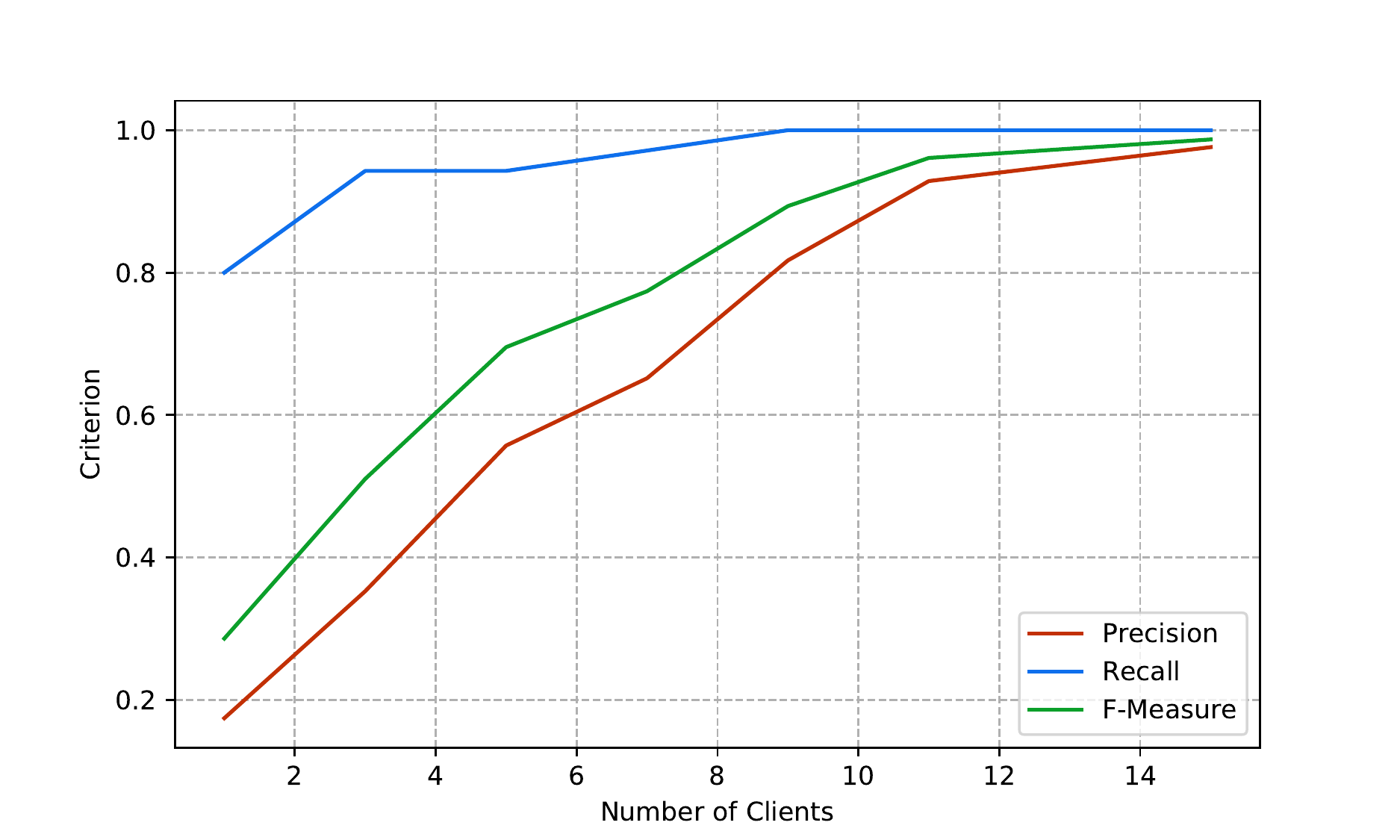}
\caption{Behaviour of accuracy, recall and $F$-measure as the number of clients is being increased. Here, we have fixed $n/N=20$, and each client sends exactly $25$ dimension indices to the server.}
\label{fixed-th}
\end{figure}

\begin{figure}
\centering
\includegraphics[width=0.48\textwidth]{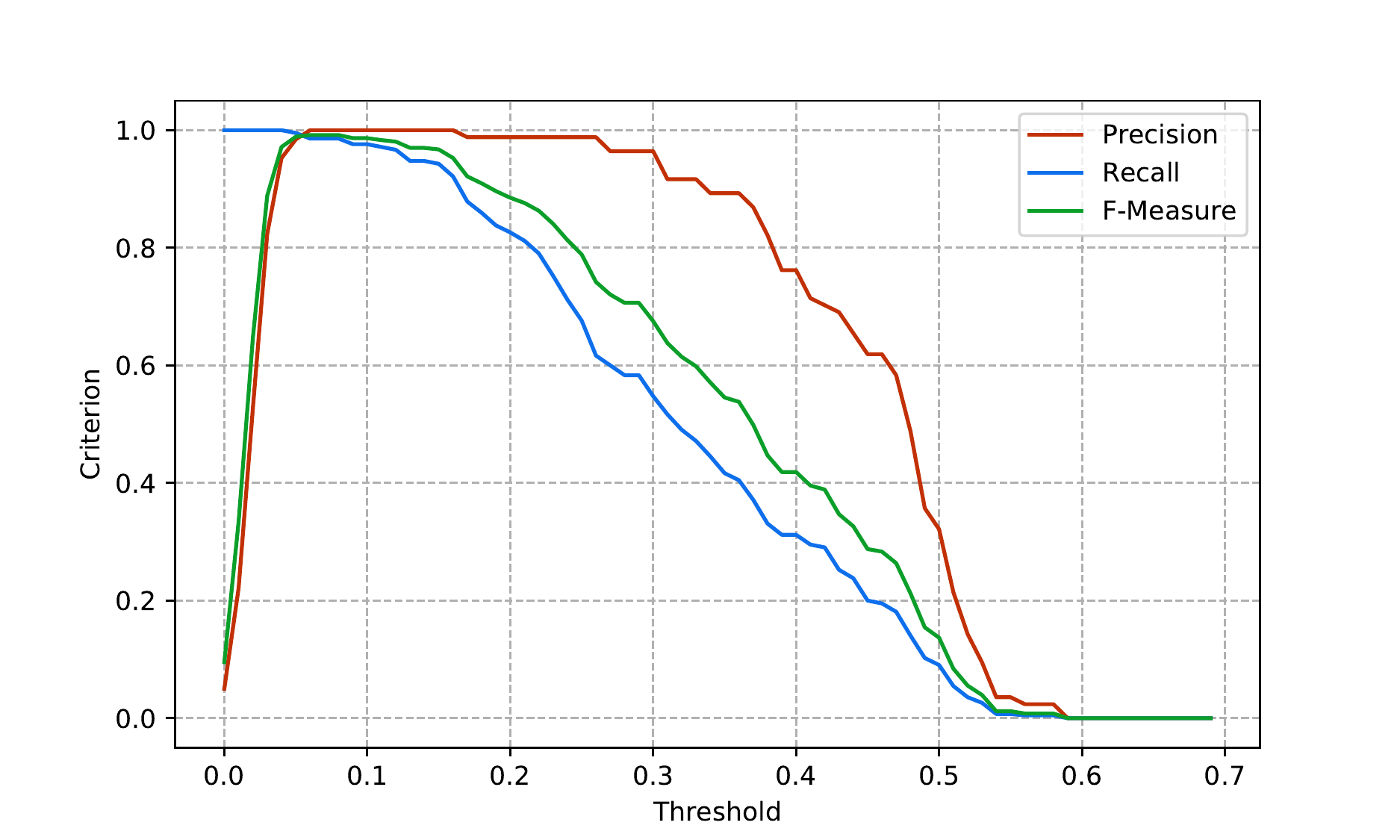}
\caption{Behaviour of accuracy, recall and $F$-measure as one increases the selection threshold $\tau$. Here, we have fixed $N=10$, and $n/N=20$. It can be seen that there exists an interval for $\tau$, where both accuracy and recall are very close to $1$.}
\label{fixed-client}
\end{figure}


\subsection{Real Data}
In this part, we set out to evaluate our method on a real-world dataset in order to detect those mutations in Human Immunodeficiency Virus Type-$1$ (HIV-1) that statistically associate with resistance to drugs\footnote{Data is available in \href{https://hivdb.stanford.edu/pages/published_analysis/genophenoPNAS2006/}{https://hivdb.stanford.edu}}. Dataset has been previously discussed and entirely investigated in \cite{Rhee17355}. The reference also contains the experimentally-approved mutation sets in the protease and Reverse Transcriptase (RT) positions of the HIV-1 sequences which correspond to 1) resistance to protease inhibitors (PI), 2) Nucleoside Reverse Transcriptase Inhibitors (NRTIs), and 3) to non-nucleoside RT inhibitors (NNRTIs). Therefore, we already know the ground truth which is necessary for evaluation of Algorithm \ref{main_alg}.

We have employed our method on one of the drugs (NFV) form the PI drug class. The observation variable, $y$, has been set to be the log-fold-increase of laboratory-tested drug resistance. This way, we can consider the linear model of \eqref{linearmodel} where the design matrix $X$ in this experiment consists of $849$ observations (rows) and $209$ mutations (features, or columns). Each row of this matrix is a sequence of $0$ and $1$, where $1$ shows the occurrence of a mutation and $0$ shows otherwise. Similar to prior works on this dataset, we only consider the mutations appearing more than $3$ times in the samples. Also, in order to deal with missing values and preparation of the data, the procedure described at the main data configuration source code\footnote{https://hivdb.stanford.edu/download/GenoPhenoDatasets/DRMcv.R} has been utilized. Our final purpose is to extract the relevant positions for drug resistance, in a distributed manner and with very low communication cost. This toy example could model a distributed and privacy-preserving collaboration format among different genetics research institutes across the world.

We have randomly distributed the samples into $N=5$ clients, and then applied Algorithm \ref{main_alg} to retrieve the most relevant mutations. 
Power (recall) and False Discovery Proportion (FDP), defined as $1-\mathrm{precision}$, are used to evaluate the results. We have compared our results with two non-distributed methods for feature extraction, which can be found in \cite{Barber_2015} and \cite{javanmard2019}, respectively. As illustrated in Figure \ref{fig:realdata}, the performance of our method is more or less comparable to rival techniques, even though in our setting data samples are distributed.
\begin{figure}
\label{fig:realdata}
\begin{center}
\includegraphics[width=0.51\textwidth]{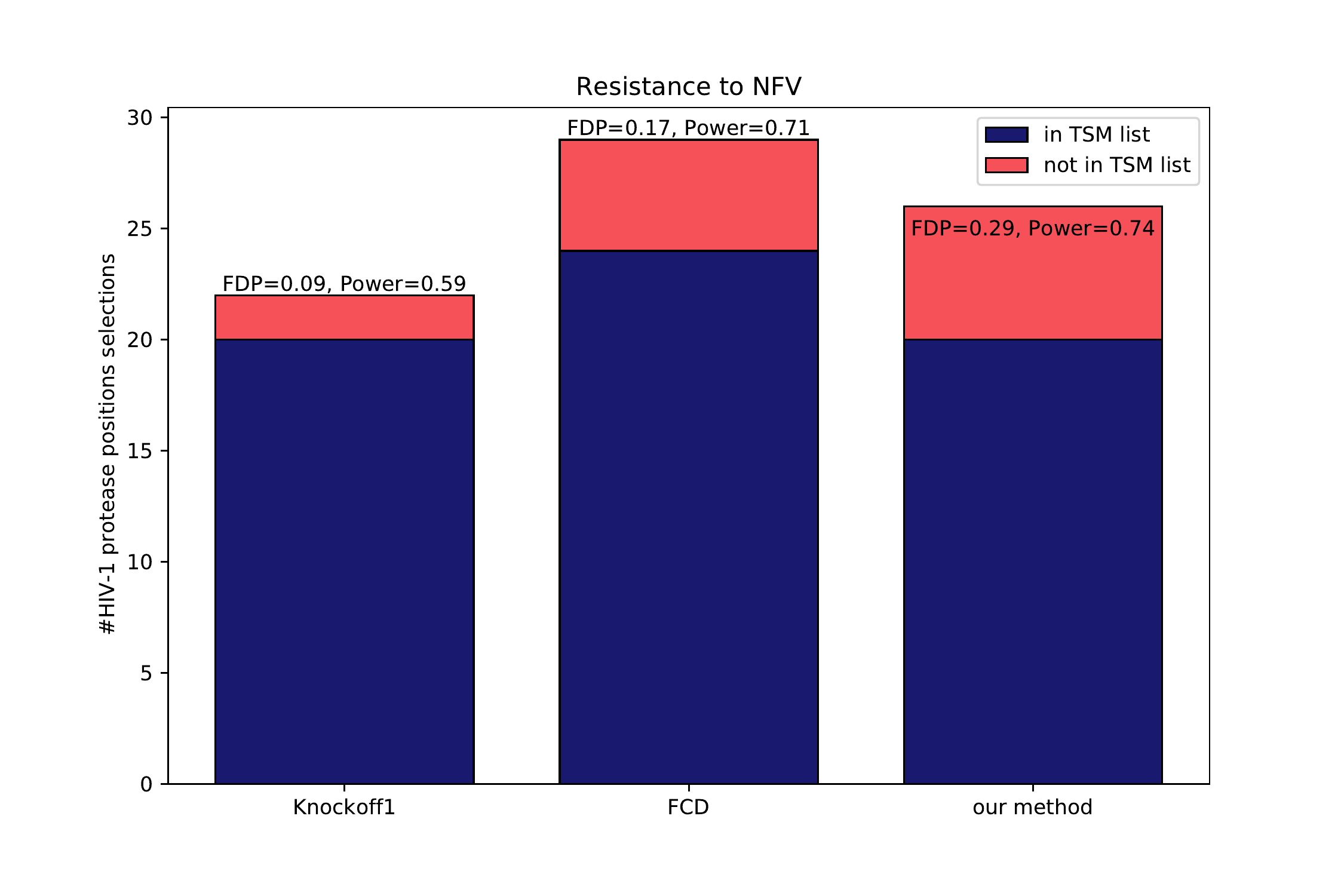}
\end{center}
\caption{Applying our method on a real dataset for HIV-1 drug resistance analysis. The blue section of each bar shows the number of truly selected mutations, while the pink section indicates otherwise. Moreover, the power and FDP of retrieving the relevant mutations are reported at the top of each bar.}
\end{figure} 




\section{Conclusion
\label{sec:conc}}
Our main contribution is to propose, and also theoretically analyze a distributed and communication-efficient scheme for feature selection in high-dimensional sparse linear models. Using our method, we prove the sample complexity of feature selection in the distributed regime has the same order as that of the centralized case whenever $\ell_1$-regularized linear regression has been utilized as the core strategy. The significance of the previous statement is the fact that such sample complexity is achievable through imposing a very low communication-cost, and without any notable extra computational burden. In particular, we show that a simple thresholding algorithm at each client, together with another simple majority-voting scheme at the server side would give us the theoretically-guaranteed low error rates of centralized scenarios, while the computation is already distributed and privacy of user data is preserved. Theoretical results in this work are based on some recent mathematical findings w.r.t. debiased LASSO estimator in \cite{javanmard2018}. Also, we have tried to validate our theoretical findings via a number of simple numerical experiments on both synthetic and real-world datasets.

\section*{Acknowledgment}

This research was supported by a grant from IPM.

\bibliographystyle{IEEEtran}
%

\bibliography{IEEEabrv,asme2e.bib}

\begin{IEEEbiography}[{\includegraphics[width=1in,height=1.25in, clip, keepaspectratio]{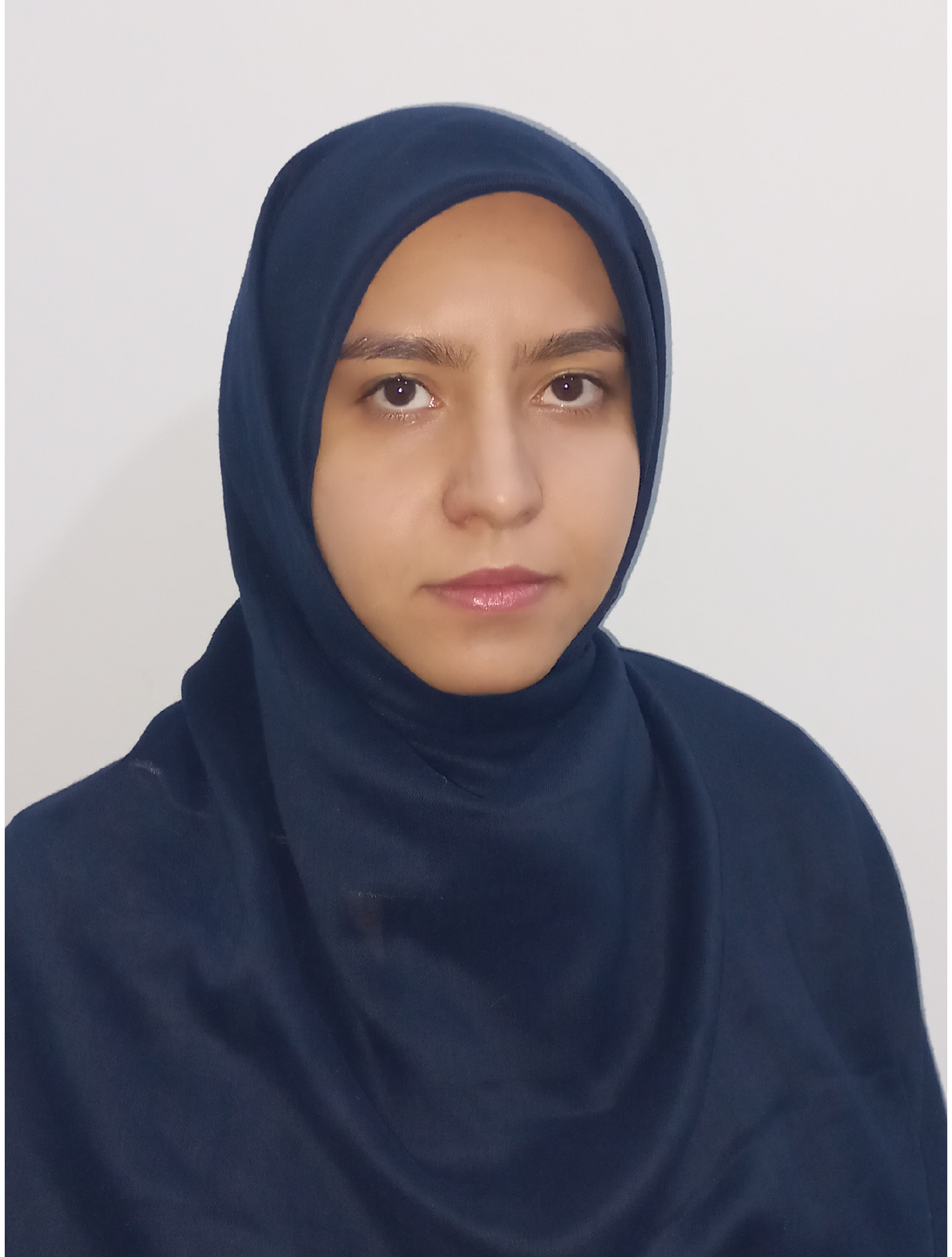}}]{Hanie Barghi}
received her B.Sc. degree in Computer Engineering from Sharif University of Technology (SUT), Tehran, Iran, in 2018. She received her M.Sc. degree in Artificial Intelligence from SUT in 2021. Her research interests include machine learning, distributed learning, and statistics.
\end{IEEEbiography}
\vfill
\begin{IEEEbiography}[{\includegraphics[width=1.2in,height=1.2in, trim=5mm 0 5mm 0,clip, keepaspectratio]{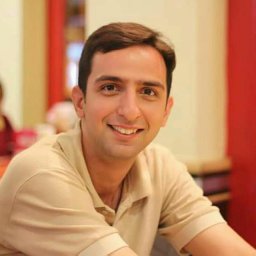}}]{Amir Najafi}
is currently a postdoctoral associate at the School of Mathematics, Institute for Research in Fundamental Sciences (IPM), Tehran, Iran. He received his B.Sc. and M.Sc. degrees in electrical engineering from Sharif University of Technology (SUT), in 2012 and 2015, respectively. He also received his Ph.D. degree in artificial intelligence from Computer Engineering Dept. of SUT in 2020. He was with the Broad Institute of MIT and Harvard, MA, USA, as a visiting research scholar in 2016, and interned at Preferred Networks Inc., Tokyo, Japan, as a researcher in learning theory group in 2018. His research interests include machine learning theory and statistics.
\end{IEEEbiography}
\vfill
\begin{IEEEbiography}[{\includegraphics[width=1.25in,height=1.25in, trim=0mm 0mm 0mm 0mm,clip, keepaspectratio]{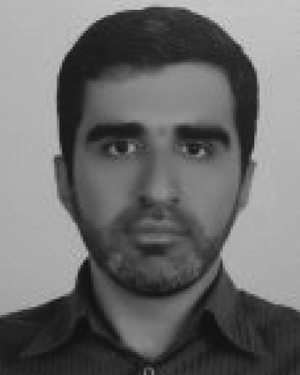}}]{Seyed Abolfazl Motahari}
received the B.Sc. degree from the Iran University of Science and Technology (IUST), Tehran, in 1999, the M.Sc. degree from Sharif University of Technology, Tehran, in 2001, and the Ph.D. degree from the University of Waterloo, Waterloo, Canada, in 2009, all in electrical engineering. He is currently an assistant professor with the Computer Engineering Department, Sharif University of Technology (SUT). From October 2009 to September 2010, he was a Post-Doctoral Fellow with the University of Waterloo, Waterloo. From September 2010 to July 2013, he was a Post-Doctoral Fellow with the Department of Electrical Engineering and Computer Sciences, University of California at Berkeley. His research interests include multiuser information theory, machine learning theory, and bioinformatics. He received several awards including Natural Science and Engineering Research Council of Canada (NSERC) Post-Doctoral Fellowship.\end{IEEEbiography}
\vfill
\appendices
\section{Calculation of the Approximate Inverse Covariance Matrix\label{app:calculatingM}}

Recall that LASSO estimator $\hat{\theta}^{\mathrm{Lasso}}$ for learning the true coefficient vector $\theta^*$ has been defined as:
\begin{equation}
    \label{lasso}
    \hat{\theta}^{\mathrm{Lasso}} = \argmin_{\theta\in\mathbb{R}^p}~\left\{
    \ell_{\lambda}\left(\theta;X,Y\right)\triangleq
    \norm{Y-X\theta}_2^2 + \lambda \norm{\theta}_1\right\},
\end{equation}
where $\lambda\ge0$ is a user-defined regularizing coefficient. In this regard, the ``debiased LASSO" estimator has been defined as an alternative which is proved to have a number of useful theoretical properties \cite{javanmard2018,zhang2011confidence,van_de_Geer_2014,6866880}:  
\begin{equation}
\hat{\theta}^d = \hat{\theta}^{\mathrm{Lasso}} + \frac{1}{n} M X^T \left(Y-X\hat{\theta}^{\mathrm{Lasso}}\right).
\end{equation}
For an unknown covariance matrix $\Sigma$, $M$ represents an estimation of the inverse covariance matrix $\Theta = \Sigma^{-1}$. In particular, we have $M = \hat{\Phi}^{-2}\hat{C}$, where the diagonal matrix $\hat{\Phi}$ is computed as
\begin{equation*}
    \hat{\Phi}^2 = \text{diag}(\hat{a}^2_1, ..., \hat{a}^2_p),~ \mathrm{with}\ \ \ \ \hat{a}_i^2 = \frac{1}{n} (\tilde{x}_i - X_{\sim i}\hat{\gamma}_i)^T\tilde{x}_i,
\end{equation*}
where $\tilde{x}_i$ is the $i$th column of design matrix $X$, and $X_{\sim i}$ is a submatrix of $X$ with $\tilde{x}_i$ being removed. Also, in \cite{javanmard2018} matrix $\hat{C}$ is defined as
\begin{equation*}
    \hat{C} = \begin{bmatrix}
    1 & -\hat{\gamma}_{1,2} & \dots &  -\hat{\gamma}_{1,p}\\
    -\hat{\gamma}_{2,1} & 1 & \dots &  -\hat{\gamma}_{2,p}\\
    \vdots & \vdots & \ddots & \vdots\\
    -\hat{\gamma}_{p,1} & -\hat{\gamma}_{p,2} & \dots &  1\\
    \end{bmatrix},
\end{equation*}
where $\hat{\gamma}_i = (\hat{\gamma}_{i,j})_{j\in[p]\backslash i} \in \mathbb{R}^{p-1}$ for $i\in [p]$ is defined as follows:
\begin{equation*}
    \hat{\gamma}_i(\tilde{\lambda}) = \displaystyle \argmin_{\gamma \in \mathbb{R}^p} \frac{1}{2n} \norm{\tilde{x}_i - X_{\sim i}\gamma}^2_2 + \tilde{\lambda} \norm{\gamma}_1.
\end{equation*}
Here, $\tilde{\lambda} = K \sqrt{\log p /n}$ with $K$ being a properly large universal constant. 


\section{Exponential Concentration for TP and FP}
\label{app:concentration}

Another look at the formulations for $\textbf{TP}$ and $\textbf{FP}$ in the proof of Theorem \ref{thm:perclientbound}, i.e.,
$$
\textbf{FP} \triangleq \displaystyle \sum_{k \not\in T} \hat{A}_k
\quad,\quad
\textbf{TP} \triangleq \displaystyle \sum_{k \in T} \hat{A}_k
$$
simply reveals that both quantities are sums of several binary random variables, and thus would naturally concentrate if the summands were independently distributed. The main challenge, however, is that $\hat{A}_k$s are not independent. In fact, $\hat{A}_k~,k\in[p]$ are the results of thresholding the components of a jointly Gaussian random vector (with high probability and up to some asymptotically small residual based on Theorem \ref{thm:debLASSO}) whose covariance matrix has non-zero, but infinitesimally small, non-diagonal entries. In other words, $\hat{A}_k$s are weakly dependent binary random variables. In order to obtain a concentration bound for the summation of such variables, one could employ Theorem 1.2 in \cite{pelekis2015hoeffdings}. 
\begin{theo}[Theorem 1.2 in \cite{pelekis2015hoeffdings}]
Suppose $H_1, ..., H_p$ to be bounded random variables with $0\le H_i \le 1$ for $i\in[p]$. Also, $H_i$s are assumed to be weakly-dependent in the following sense: there exists a constant $\gamma \in (0,1)$ such that for all $Q \subseteq [p]$ the following inequality holds:
\begin{equation}
\label{multcond}
\mathbb{E}\left(\prod_{i \in Q} H_i\right) \leq \gamma^{\abs{Q}},
\end{equation}
where $\abs{Q}$ denotes the cardinality of $Q$. Let us fix a real number $\epsilon\in(0,\frac{1}{\gamma}-1)$, and set $t = p\gamma+p \gamma \epsilon$. Then, there exists a universal constant $b \ge 1$ such that
\begin{equation*}
\mathbb{P}\left(\displaystyle \sum_{i=1}^{p} H_i \ge t\right) \le b e^{-p\mathcal{D}\left(\gamma(1+\epsilon)||\gamma\right)},
\end{equation*}
where $\mathcal{D}\left(\gamma(1+\epsilon)||\gamma\right)$ denotes the KL divergence between two Bernoulli random variables with success probabilities of $\gamma(1+\epsilon)$ and $\gamma$, respectively:
\begin{align*}
\mathcal{D}\left(\gamma(1+\epsilon)||\gamma\right)=~&
\gamma(1+\epsilon)
\log\left(\frac{\gamma(1+\epsilon)}{\gamma}\right)
\\
&+
(1-\gamma-\gamma\epsilon) \log\left(\frac{1-\gamma(1+\epsilon)}{1-\gamma}\right).
\end{align*}
\label{thm:depconcent}
\end{theo}

In this regard, we should first check if condition \eqref{multcond} holds true for random variables $\{\hat{A}_k: k \not\in T\}$ in client $i\in[N]$. It should be noted that a very similar analysis can also be preformed for random variables $\{\hat{A}_k: k \in T\}$. For all $Q \subseteq [p]-T$, we have
\begin{align}
\mathbb{E}\left(\displaystyle \prod_{k \in Q} \hat{A}_k\right) =& \mathbb{P}\left(\abs{\hat{\theta}^{d,(k)}_{i}} \ge \tau_i\big\vert~k\in Q\right)
\nonumber\\
\leq&\mathbb{P}\left(\abs{\hat{\theta}^{d,(k)}_{i}-R_k} \ge \tau_i-\abs{R_k}~\big\vert~k\in Q\right).
\label{eq:chichi}
\end{align}
For simplicity in notation, let us omit the client subscript $i$, thus we have $\tau\leftarrow\tau_i$ and $\hat{\theta}^d\leftarrow\hat{\theta}^d_i$. Also, for any vector $X\in\mathbb{R}^p$, let $X_{[Q]}$ denote a subvector of $X$ with component indices in $Q$. In a similar fashion, for matrix $W\in\mathbb{R}^{p\times p}$, let $W_{[Q]}$ denote a submatrix of $W$ with both column and row indices in $Q$.

Then, based on Theorem \ref{thm:debLASSO}, we already know that $\left(\hat{\theta}^d-R\right)_{[Q]} \sim \mathcal{N}\left(0, \tilde{\Sigma}_{[Q]}\right)$, where $\norm{R}_{\infty}$ is infinitesimally small with high probability. Therefore, we require an element-wise tail bound for a multivariate jointly Gaussian distribution with mean $R_{[Q]}$ and covariance matrix $\tilde{\Sigma}_{[Q]}$. Let $d\triangleq\abs{Q}$. In order to bound \eqref{eq:chichi}, we use the Proposition 2.1 and some subsequent derivations in \cite{multigau}, which are described through the following lemma.
\begin{lemm}[Proposition 2.1 and Eq. (3.1) of \cite{multigau}]
\label{lemma:gaussiancon}
Let $\Sigma \in \mathbb{R}^{d\times d}$ be a positive definite covariance matrix for $d\ge 2$ (case of $d=1$ is trivial), $\Theta\triangleq\Sigma^{-1}$ which exists due to the previous assumption, and $x^*$ denote the unique solution of the following constrained quadratic program:
$$
\min_{x\in\mathbb{R}^d}~\left\langle x\vert \Theta x\right\rangle
\quad\subjectto\quad
x \succcurlyeq t,
$$
where $t\in\mathbb{R}^d$ and $t\not\in (-\infty,0]^d$, $\langle\cdot\vert\cdot\rangle$ denotes the inner product, and $\succcurlyeq$ means element-wise $\ge$ operation. Then, there exists a unique index set $E \subseteq [d]$ such that
\begin{itemize}
\item
$\abs{E}=O(d)$,

\item
$x^*_{[E]} = t_{[E]} \neq \boldsymbol{0}$; And in case we have  $\abs{E}<d$, then $x^*_{[E^c]}=-\left(\Theta_{[E^c]}\right)^{-1}\Theta_{[E^c,Q]}t_{[E]} \succcurlyeq t_{[E^c]}$,

\item
we have $h_i\triangleq \left\langle e_i\big\vert \left(\Sigma_{[E]}\right)^{-1}t_{[E]}\right\rangle > 0$, for all $i\in E$,
\end{itemize}
where $e_i$ is the $i$-th unit vector in $\mathbb{R}^{|E|}$, and $E^c$ is the complement of $E$. Moreover, let us define $$\alpha\triangleq\min_{x\succcurlyeq t} \left\langle x\vert \Theta x\right\rangle > 0.$$
Now, let $H\sim\mathcal{N}\left(0,\Sigma\right)$, and set $t\triangleq\left(\tau,\ldots,\tau\right)^T\in\mathbb{R}^d$. Then, we have 
\begin{align}
\mathbb{P}\left(H\succcurlyeq t\right) 
\leq
\mathbb{P}\left(H_{[E]}\succcurlyeq t_{[E]}\right) 
\leq
\frac{e^{-\alpha/2}
}{\sqrt{
(2\pi)^{|E|}|\Sigma_{[E]}|
}} \displaystyle \prod_{i\in E} h_i^{-1}.
\label{eq:proposition2.1in23}
\end{align}
\end{lemm}
Based on Lemma \ref{lemma:gaussiancon}, and in particular \eqref{eq:proposition2.1in23}, it can be seen that when $n/N=\Theta\left(p\right)$, the expectation in \eqref{eq:chichi} drops exponentially w.r.t. $\abs{E}$, where $E$ is a specific subset of $Q$ with $\vert E\vert=O(d)$. Since $h_i$s are all strictly smaller than $1$ with high probability, it can be seen that there exists a positive constant $\chi<1$ such that for all $Q\subseteq [p]-T$, we have:
$$
\mathbb{E}\left(\displaystyle \prod_{k \in Q} \hat{A}_k\right)
\leq
\chi^{\left\vert Q\right\vert}.
$$
Exact calculation of $\chi$ goes beyond the scope of this work and our efforts have been focused on proving its existence. Combining the above result with those of Theorem \ref{thm:depconcent}, gives exponential concentration bounds (similar to Chernoff bounds for independent R.V.s) for both $\mathbf{FP}$ and $\mathbf{TP}$, whose exponents are linear in $p-s_0$ and $s_0$, respectively. This will give far better high probability guarantees compared to those given in Corollary \ref{corl:concentration}. However, the required constraints for this property, i.e., $n/N=O(p)$, does not hold in many practical purposes.


\section{Proof of Lemmas\label{appendix}}

\begin{proof}[proof of Lemma \ref{norminfnorm1}]
We can write $\abs{A\times B}_\infty$ as follows
\begin{equation*}
    \abs{A\times B}_\infty = \displaystyle \max_{i,j}{\abs{\langle A_i, B^T_j\rangle}}
\end{equation*}
Form Holder inequality, $\abs{\langle x,y\rangle} \le \norm{x}_q \norm{y}_p, \frac{1}{q}+\frac{1}{p} = 1$, we have
\begin{align*}
    \abs{A\times B}_\infty &\le \displaystyle \max_i \abs{A_i}_\infty \times \max_j \norm{B^T_j}_1
    \\
   \Rightarrow \abs{A\times B}_\infty &\le \abs{A}_\infty \times \norm{B}_1
\end{align*}
\end{proof}
\begin{proof}[Proof of Lemma \ref{covbound}]
We analyze this case in two different regimes: 1) when covariance matrix is identity matrix $I_{p\times p}$, but it has been assumed unknown for clients, and 2) when covariance matrix is known and has a bounded condition number. The condition for the first case can be further relaxed; In fact, it is enough to assume the inverse covariance matrix is sufficiently sparse which indicates the dimensions of design matrix are weakly-dependent. However, we omit further details for the sake of simplification.

For the first case, let us obtain an upper-bound for $$\abs{(M\hat{\Sigma}-I)(M^T-I)}_\infty.$$
Due to Lemma \eqref{norminfnorm1}, we have
\begin{align*}
    \abs{(M\hat{\Sigma}-I)(M^T-I)}_\infty  &\le \abs{M\hat{\Sigma}-I}_\infty \times \norm{M^T-I}_1\\
    &=^{(a)} \abs{M\hat{\Sigma}-I}_\infty \times \norm{M-I}_\infty \\
    & \lesssim^{(b)} \sqrt{\frac{\log p}{n}} \times \norm{M-I}_\infty \\
    & \lesssim^{(c)} \sqrt{\frac{\log p}{n}} \times s_\Theta \sqrt{\frac{\log p}{n}}
\end{align*}
where (a) is deduced from the definition of $\infty$-norm and norm-$1$, and $\norm{A^T}_1 = \norm{A}_\infty$. Inequality (b) is derived from (10) and Lemma (5.3) of \cite{van_de_Geer_2014}, and for the proof of inequality (c) we refer to Theorem 2.4 in \cite{van_de_Geer_2014}, where $s_{\Theta} =\max_{i\in [p]}\abs{\{j \neq i, \Theta_{i,j} \neq 0\}}$. Thus, we can simplify the above chain of inequalities as
\begin{align}
\label{thisEqLOL}
    \abs{M\hat{\Sigma}M^T - M\hat{\Sigma} + I - M^T}_\infty \lesssim s_\Theta \times \frac{\log p}{n}.
\end{align}
By adding $M\hat{\Sigma}-I$ to the l.h.s. of \ref{thisEqLOL} and using the fact that $\abs{A + B}_\infty \le \abs{A}_\infty + \abs{B}_\infty$, we have
\begin{align*}
    \abs{M\hat{\Sigma}M^T - M^T}_\infty & \lesssim s_\Theta \times \frac{\log p}{n} + \abs{M\hat{\Sigma}-I}_\infty \\
    & \lesssim s_\Theta \times \frac{\log p}{n} + \sqrt{\frac{\log p}{n}}.
\end{align*}
Using triangle inequality of $\ell_{\infty}$-norm, 
and noticing the fact that $\abs{A}_\infty = \abs{A^T}_\infty$, we have
\begin{align*}
    \abs{M\hat{\Sigma}M^T - I}_\infty &\lesssim s_\Theta \times \frac{\log p}{n} + \sqrt{\frac{\log p}{n}} + |M^T-I|_\infty \\ & = s_\Theta \frac{\log p}{n} + \sqrt{\frac{\log p}{n}} + |M-I|_\infty \\ 
    & \le s_\Theta \frac{\log p}{n} + \sqrt{\frac{\log p}{n}} + \norm{M-I}_\infty \\ 
    & \lesssim s_\Theta \times \frac{\log p}{n} + (1 + s_\Theta) \sqrt{\frac{\log p}{n}}
\end{align*}
We assumed $\Theta=\Sigma^{-1}=I$, hence $s_{\Theta} = 0$, therefore
$
     \abs{M\hat{\Sigma}M^T - I}_\infty \lesssim \sqrt{\frac{\log p}{n}}
$.
As a result, we have
\begin{equation*}
    \abs{M\hat{\Sigma}M^T}_\infty \lesssim 1 + \sqrt{\frac{\log p}{n}}.
\end{equation*}
For the second case, we prove the bound by using the Hoeffding inequality. When covariance matrix is known, one can simply set $M=\Sigma^{-1}$. This way, we have
$$
M\hat{\Sigma}M^T=
\Sigma^{-1}\hat{\Sigma}\Sigma^{-1}=
\Sigma^{-1}\left(\frac{1}{n}\sum_{i=1}^{n/N}X_iX^T_i\right)\Sigma^{-1}
$$
where, with a little abuse of notation, $X_i$ denotes the $i$th row of the design matrix $X$. As it is evident, each entry of the empirical covariance matrix $\hat{\Sigma}$ is the sum of $n$ independent random samples with unbiased means and bounded variances. More precisely, let us write
$$
\hat{\Sigma}=\Sigma + \Sigma^{1/2}\Delta\Sigma^{1/2}.
$$
Then, $\Delta_{i,j}$ for each $i,j\in[p]$ represents the sum of $n$ i.i.d. samples with zero mean and a variance of most $2$. Thus, it can be bounded as follows:
\begin{equation}
\mathbb{P}\left(
\left\vert\Delta_{i,j}\right\vert >\varepsilon
\right)
\leq
2\exp\left(-\frac{n\varepsilon^2}{2W_{i,j}^2}\right),
\end{equation}
where, for $Z,Z'\sim\mathcal{N}\left(0,1\right)$ being two independent normal R.V.s, we have
$$
W_{i,j}^2\triangleq
\mathbb{E}\left[
\left(ZZ'\right)^2
\right]=1
$$
for $i\neq j$, and
$$
W_{i,i}^2\triangleq
\mathbb{E}\left[
\left(Z^2-1\right)^2
\right]=2.
$$
for $i\in[p]$. Using the union bound, we have
$$
\mathbb{P}\left(
\left\vert
\Delta
\right\vert_{\infty}
\ge \varepsilon
\right)
\leq
2p^2 e^{-n\varepsilon^2/2},
$$
or alternatively, with a probability of at least $1-O\left(p^{-1}\right)$ we have
$$
\left\vert\Delta\right\vert_{\infty}
\lesssim
\sqrt{\frac{\log\left(2p^3\right)}{2n}}
\lesssim
\sqrt{\frac{\log p}{n}}.
$$
Finally, it can be easily seen that
$$
\left\vert
M\hat{\Sigma}M^T-\Sigma^{-1}
\right\vert_{\infty}
=
\left\vert
\Sigma^{-1/2}\Delta\Sigma^{-1/2}
\right\vert_{\infty},
$$
and since the maximum eigenvalue of $\Sigma^{-1}$ is already assumed to be bounded irrespective of $p$, the same bound (up to a different constant) holds for $\left\vert M\hat{\Sigma}M^T-\Sigma^{-1}\right\vert_{\infty}$, which completes the proof.
\end{proof}

\end{document}